\newtheorem{theorem}{Theorem}[section] 
\newtheorem{corollary}[theorem]{Corollary} 
\newtheorem{remark}{Remark}[section]
\newcommand{\vertiii}[1]{{\left\vert\kern-0.25ex\left\vert\kern-0.25ex\left\vert #1 
		\right\vert\kern-0.25ex\right\vert\kern-0.25ex\right\vert}}
\begin{document}

\begin{frontmatter}

\title{Structure-Preserving Digital Twins via Conditional Neural Whitney Forms }
\author[1]{Brooks Kinch}
\author[1]{Benjamin Shaffer}
\author[2]{Elizabeth Armstrong}
\author[2]{Michael Meehan}
\author[2]{John Hewson}
\author[1,2]{Nathaniel Trask}
\address[1]{Mechanical Engineering and Applied Mechanics, University of Pennsylvania, Philadelphia, PA, USA}
\address[2]{Sandia National Laboratories, Albuquerque, NM, USA}

\begin{abstract}
We present a framework for constructing real-time digital twins based on
structure-preserving reduced finite element models conditioned on a latent
variable \(Z\). The approach uses conditional attention mechanisms to learn both a reduced finite element basis and a nonlinear conservation law within the framework of finite element exterior calculus (FEEC). This guarantees numerical well-posedness and exact preservation of conserved quantities, regardless of data sparsity or optimization error. The conditioning mechanism supports real-time calibration to parametric variables, allowing the construction of digital twins which support closed loop inference and calibration to sensor data. The framework interfaces with conventional finite element machinery in a non-invasive manner, allowing treatment of complex geometries and integration of learned models with conventional finite element techniques.

Benchmarks include advection diffusion, shock hydrodynamics, electrostatics,
and a complex battery thermal runaway problem. The method achieves accurate
predictions on complex geometries with sparse data (25 LES simulations),
including capturing the transition to turbulence and achieving real-time inference (\(\sim 0.1\)\,s) with a speedup of \(3.1 \times 10^8\) relative to LES. An open-source implementation is available on GitHub.
\end{abstract}

\begin{keyword}
scientific machine learning \sep%
Whitney forms \sep%
finite element exterior calculus \sep%
digital twin \sep%
data-driven modeling %
\end{keyword}

\end{frontmatter}

\section{Overview and literature review}\label{sec:overview}

\subsection{Digital twins for complex physical systems.} 

Digital twins are simulators that support real-time prediction of a physical
system while simultaneously calibrating to real-time sensor data. The recent
National Academies report \cite{willcox2023foundational} identifies this
closed-loop capability as central to a digital twin:

\begin{quote}
A digital twin is a set of virtual information constructs that mimics the
structure, context, and behavior of a natural, engineered, or social system
(or system-of-systems), is dynamically updated with data from its physical
twin, has a predictive capability, and informs decisions that realize value.
The bidirectional interaction between the virtual and the physical is central
to the digital twin.
\end{quote}

While a digital twin could be as simple as a linear regression model, recent advances in data-driven techniques have significantly improved accuracy and speed of predictions. The literature may broadly be split between two approaches: projection-based reduced order models (ROMs) or other dimension-reduced FEM methods (e.g. \cite{kapteyn2020interpretable,cicci2023cardiac,pretorius2021predictive,yao2018nonintrusive}) or neural operators (e.g. \cite{chattopadhyay2024oceannet,liu2024deeponetdt,kobayashi2024deeponet}) using techniques like Fourier Neural Operators \cite{li2021fourier,patel2021physics}, DeepONets \cite{lu2021learning,goswami2022physics}, or modern transformer architectures \cite{wang2024cvit,bodnar2025earth}. Both strategies support real-time prediction but have complementary limitations. ROMs are built on a rigorous Galerkin framework and support analysis, but nonlinear and advection-dominated problems are difficult to treat, particularly when the governing equations are partially known. Neural operators are flexible and can regress a wide range of behaviors, but they are black-box models that typically do not provide guarantees of conservation or stability.

The goal of this work is to combine these benefits. We develop a
transformer-based architecture that encodes a reduced mixed finite element
space representing the geometry and a nonlinear discrete flux representing the physics, in effect merging the structure of ROMs with the flexibility of data-driven neural operators.

\subsection{Operator regression and conditional neural fields.} A number of works in scientific machine learning have considered the operator regression problem: learning of infinite-dimensional function to function maps. Motivated by the Riesz representation theorem, this task requires learning both a representation of functions in the domain and a suitable dual representation for the range. Many previous works may be understood in this context. DeepONets, separate the representation of the input function into a branch net and the spatial operator into a trunk net, effectively constructing a data-driven basis and projection \cite{lu2021learning}. Other methods explicitly construct a reduced basis with principal component analysis before encoding the operator with a neural network \cite{kovachki2023neural}. Other approaches encode operators via finite difference stencils with convolutional or graph architectures \cite{barsinai2019datadriven}.

In this work, we adopt the perspective of \textit{conditional neural fields}, an operator learning formulation originally developed in computer vision to model structured image-to-image mappings modulated by a second conditioning field \cite{xie2022neural}. From a probabilistic viewpoint, this corresponds to sampling from a data distribution conditioned on a latent or observed field, often in a generative modeling context. Architectures designed for this setting commonly employ self-attention layers to capture intra-field structure and cross-attention mechanisms to incorporate the conditioning field, providing a natural mechanism for integrating multiple modalities. Recently, these architectures have been identified as a powerful alternative for operator regression \cite{wang2024bridging} and used in auto-regressive frameworks for building physics-emulators of problems like weather near-casting and composite mechanical modeling \cite{wang2024cvit,wang2024micrometer,bodnar2025earth}. In this work, we consider an abstract
conditioning field \(Z\) that may represent a vector of sensor readings, or
other parameters influencing the finite element space and governing physics.

\subsection{Physical structure preservation with FEEC.}

For many classes of multiphysics problems, structure preservation in the form of mass, momentum, and energy conservation, as well as variational structure that supports stability and well-posedness analysis, is crucial for quantitative and reliable predictions. A large fraction of physics-informed machine learning strategies embed physics by adding PDE residuals and other desirable properties as penalty terms in the loss function. It is well known that this weak enforcement of physics can lead to difficulties in both training and accuracy, although recent work has proposed schemes that partially mitigate these effects \cite{wang2021understanding,wang2024multistage}.

An alternative trend is to design neural architectures that preserve structure by construction. These methods have been proposed in both physical and non-physical contexts. Examples include equivariant or invariant networks that encode symmetries such as rotation, translation, reflection, or gauge \cite{cohen2016group,batzner2022e3}, monotone or convex networks \cite{amos2017input,runje2022constrained,bengio2005convex,chaudhari2023learning,schotthofer2022structure}, Hamiltonian, Lagrangian, or symplectic networks \cite{greydanus2019hamiltonian,cranmer2020lagrangian,chen2019symplectic} and dissipative bracket generalizations \cite{gruber2023reversible,gruber2024efficiently},
permutation equivariance \cite{roddenberry2021principled}, topological invariants \cite{zhang2018machine,hu2019topology}, mechanical or thermodynamic principles \cite{vlassis2021sobolev,cai2021equivariant}, and tensor invariants \cite{ling2016machine,fuhg2023stress}.

In traditional modeling, finite element exterior calculus (FEEC) provides a general framework for constructing finite element spaces that preserve the topological structure of differential operators such as grad, curl, and div \cite{arnold2006finite,arnold2018finite}. In this framework, the connection between physics (expressed as exchanges of fluxes) and geometry (expressed through generalized Stokes theorems) is captured using exterior derivatives and
the de Rham complex. Preserving this topological structure leads to
discretizations that conserve invariants \cite{arnold2007compatible}, admit natural interface conditions for hybridizable schemes
\cite{arbogast2007multiscale,jiang2024structure}, and provide a rigorous and stable platform for large-scale multiphysics simulation
\cite{anderson2021mfem,rathgeber2016firedrake,bochev2012solving}.

In our previous work \cite{trask2022enforcing}, we developed a data-driven version of the exterior calculus that enables PDE-constrained optimization of graph-based networks to extract models from data. This formulation preserves the de Rham complex and includes a complete Lax-Milgram-style stability analysis. We later introduced data-driven Whitney forms that allow finite element spaces to
be learned \cite{actor2024data}, exposing a differentiable generalization of a computational mesh and providing a means of learning a structure-preserving reduced basis. In the current work, we combine these ideas to learn a parsimonious representation of geometry and physics simultaneously, identifying a set of control volumes that provide an optimal description of the dynamics.

\subsection{Primary contributions of the current work.}
This work builds upon our previous foundational development of a data-driven exterior calculus \cite{trask2022enforcing} and data-driven Whitney forms \cite{actor2024data}. While these works establish foundational techniques, they did not construct a fully nonlinear, data-driven finite element method and did not employ transformers or a conditioning mechanism, with analysis restricted to non-linear elliptic problems or to linear Hodge Laplacian problems. We highlight the primary contributions of the current work here:
\begin{itemize}
    \item Well-posedness analysis for a generic nonlinear perturbation of a Hodge Laplacian, providing justification for equality constrained optimization.
    \item Learning problem considers a corresponding class of nonlinear conservation laws with guaranteed existence and uniqueness of solutions.
    \item Whitney forms constructed via a novel conditioning mechanism to allow real time calibration of reduced finite element space and models to parametric dependencies and real time inference.
    \item A novel conditional transformer backbone supports highly accurate conditional regression with orders of magnitude accuracy demonstrated over a traditional DeepONet.
    \item The finite element construction admits a non-invasive implementation for complex geometries, allowing the technique to be adopted to mature legacy codes with modest software engineering.
    \item The learned physics representation is discretization invariant and generalizes across geometries.
    \item We provide an open-source PyTorch implementation with example code to interface with an open source conventional FEM library \cite{gustafsson2020scikit}.
    \item Construction of a digital twin for thermal runaway management of a lithium ion battery provides real-time inference of convective heat transfer calibrated to temperature sensor data, capturing the transition to turbulence and obtaining a speedup of $3.11\times10^8$ over LES simulations used to train the surrogate model.
\end{itemize}

In contrast to standard autoregressive transformers that interpolate time‐series frames, our approach yields \emph{reduced‐order finite‐element models} that integrate naturally into existing finite element workflows. One can impose \emph{initial and boundary conditions} in the reduced basis in a traditional Galerkin sense; apply \emph{classical uncertainty quantification} methods, e.g.\ Monte Carlo or stochastic collocation, on the reduced system to obtain statistical error bounds \cite{xiu2005high,nobile2008sparse}; and \emph{couple subdomain ROMs with full FEM} via non‐overlapping domain‐decomposition algorithms for multi‐component simulations \cite{jiang2024structure}. The general framework may be executed non-invasively on any underlying finite element space with positive basis functions satisfying a partition of unity property, and therefore hp-finite elements or hierarchical bases and other standard techniques for accessing improved accuracy, scalability or adaptivity may easily be integrated.


\section{Mathematical preliminaries}

We gather here fundamental definitions and key results from previous works. For an introduction to FEEC we refer readers to Whitney's original paper \cite{Whitney1957}, Arnold's overview work \cite{arnold2018finite}, as well as \cite{bochev2006principles} for a more holistic viewpoint of how many seemingly different discretizations (mimetic finite differences, staggered finite volumes, FEEC, and others) admit a unified analysis through exterior calculus. We refer to our previous works: \cite{trask2022enforcing} for an introduction to the data-driven exterior calculus and analysis of learning nonlinear conservation laws, and \cite{actor2024data} for learning Whitney forms and their construction/analysis on manifolds of arbitrary dimension. In the present work, we restrict ourselves to a simple Euclidean setting on a compact domain $\Omega \subset \mathbb{R}^d$ with Lipschitz boundary $\partial \Omega$ and outward facing normal $\hat{n}$. We denote the $L^2$-inner product on $\Omega$ by $(\cdot,\cdot)$, and the corresponding trace on $\partial \Omega$ by $<\cdot,\cdot>$. When expanding a function in a basis, we use $\hat{\cdot}$ to denote basis coefficients, e.g. $u(x) = \underset{i}{\sum} \hat{u}_i \phi_i(x)$. We regularly adopt the Einstein summation convention, in which repeated indices imply a sum (e.g. $A_{ij} x_j := \underset{j}{\sum} A_{ij} x_j$).

\subsection{Whitney forms}
Consider a \textit{partition of unity} consisting of a set of functions $\lambda_i$, $i = 1,\dots,N$, with the property $\lambda_i \geq 0$ for all $i$ and $\sum_i \lambda_i = 1$. In the typical FEEC construction, $\lambda$ corresponds to a barycentric interpolant of nodal degrees of freedom (DOFs) on simplices corresponding to the traditional $\mathcal{P}_1$ nodal FEM space. The \textit{low-order Whitney forms} correspond to a family of finite element spaces:

\begin{subequations} \label{whitneyspaces}
\begin{align}
\mathcal{W}_0 &= \operatorname{span} \big\{\, \lambda_i \big\}, \label{eq:W0} \\[6pt]
\mathcal{W}_1 &= \operatorname{span} \big\{\, \lambda_i \nabla \lambda_j - \lambda_j \nabla \lambda_i \big\}, \\[6pt]
\mathcal{W}_2 &= \operatorname{span} \Big\{\,
    \lambda_i \nabla \lambda_j \times \nabla \lambda_k 
    + \lambda_j \nabla \lambda_k \times \nabla \lambda_i 
    + \lambda_k \nabla \lambda_i \times \nabla \lambda_j
\,\Big\}, \\[6pt]
\mathcal{W}_3 &= \operatorname{span} \Big\{\,
    \begin{aligned}[t]
    &\lambda_i \nabla \lambda_j \cdot (\nabla \lambda_k \times \nabla \lambda_l)
-\lambda_j \nabla \lambda_i \cdot (\nabla \lambda_k \times \nabla \lambda_l)\\
+&\lambda_k \nabla \lambda_i \cdot (\nabla \lambda_j \times \nabla \lambda_l)
-\lambda_l \nabla \lambda_i \cdot (\nabla \lambda_j \times \nabla \lambda_k)
    \Big\}.
    \end{aligned}
\end{align}
\end{subequations}

Note that each is antisymmetric with respect to an exchange of indices. In $\mathbb{R}^3$, one may identify: $\mathcal{W}_0$ with Lagrange elements (nodal point evaluation DOFs $\mu^0_i(u) = \delta_{x_i} \circ u$); $\mathcal{W}_1$ with Nedelec elements (edge circulation DOFs $\mu^1_i(u) = \int_{e_i} \mathbf{u} \cdot d \mathbf{l}$); $\mathcal{W}_2$ with Raviart-Thomas elements (face flux DOFs $\mu^2_i(u) = \int_{f_i} \mathbf{u} \cdot d \mathbf{A}$); and $\mathcal{W}_3$ with the space of discontinuous piecewise constant functions (cell average DOFs $\mu^3_i(u) = \int_{c_i} u dV$). These finite element spaces provide conforming subspaces to corresponding $H(grad)$, $H(curl)$, $H(div)$ and $L^2$ function spaces, respectively. In the remainder, we denote as bases $\mathcal{W}_k = span\left( \psi^k \right)$. Together, they form the \textit{de Rham complex}:

\begin{equation}
\begin{tikzcd}[row sep=large, column sep=large]
H(grad) \arrow[r, "\nabla"] & H(curl) \arrow[r, "\nabla \times"] & H(div) \arrow[r, "\nabla \cdot"] & L^2 \\
\mathcal{W}_0 \arrow[u, hook] \arrow[r, "d_0"] & \mathcal{W}_1 \arrow[u, hook] \arrow[r, "d_1"] & \mathcal{W}_2 \arrow[u, hook] \arrow[r, "d_2"]  & \mathcal{W}_3 \arrow[u, hook]
\end{tikzcd}.
\end{equation}

The linear maps $d_k:\mathcal{W}_k \rightarrow \mathcal{W}_{k+1}$ denote \textit{discrete exterior derivatives} that serve as discrete analogues of continuous gradient, curl, and divergence operators. These maps are surjective ($d_k \mathcal{W}_k \subseteq \mathcal{W}_{k+1}$) and form an \textit{exact sequence} satisfying $d_{k+1} \circ d_{k} = 0$. This exactness property reflects the discrete counterpart of the familiar vector calculus identities $\nabla \times \nabla = \nabla \cdot \nabla \times = 0$, encoding topological structure and ensuring that geometric relations are consistent with the underlying conservation laws.

\subsection{The surjectivity property and conservation structure}

The surjective property of Whitney forms may be used to both discretize diffusive fluxes and to discretize conservation balances while exactly preserving conservation of fluxes. A rigorous discussion may be found in \cite{actor2024data}; here we use a simple example to illustrate how Whitney forms provide an exact treatment of div/grad/curl and conservation structure.

To illustrate the first, consider the following Galerkin expression projecting the gradient of $u \in \mathcal{W}_0$ as a field $J \in \mathcal{W}_1$, for all $v \in \mathcal{W}_1$.
\begin{equation}
    (J,v) = (\nabla u, v)
\end{equation}
By surjectivity, $J = \nabla u$ exactly, and thus $J$ represents the gradient of $u$ in a pointwise manner. Alternatively, one may observe that the degrees of freedom for $J$ may be computed from the fundamental theorem of calculus as $\mu^1_i(J) = \int_{e_i} J \cdot d \mathbf{l} = u(e_i^+) - u(e_i^-)$. To illustrate discrete conservation balance, a divergence form conservation law balancing fluxes $J$ against sources $f$ may be discretized

\begin{equation}
    \nabla \cdot J = f \quad \rightarrow \quad (J,\nabla q) = <J,q> - (f,q) \text{   for all } q \in \mathcal{W}_0.
\end{equation}
Using only properties of the partition of unity ($\sum_i \lambda_i = 1$, $\sum_i \nabla \lambda_i = 0$), and choosing $q = \lambda_i$
\begin{align}\label{antisymmflux}
    (J,\nabla \lambda_i) &= \sum_j \int \left( \lambda_i \nabla \lambda_j - \lambda_j \nabla \lambda_i \right) \cdot J dx\\
    &= \sum_j \int \psi_{ij}^1 \cdot J dx,
\end{align}
we arrive at a discrete divergence over partition $i$ in terms of discrete fluxes ($\int \psi^1_{ij}\cdot J dx$) which are antisymmetric in $i$ and $j$. Therefore, the Whitney 1-forms encode equal and opposite fluxes exchanged between partitions of space encoded by Whitney 0-forms, providing a local conservation principle; if \eqref{antisymmflux} is summed over $i$, we obtain a telescoping series so that internal fluxes cancel each other out. For further discussion regarding conservation in data-driven FEEC, see \cite{actor2024data}.

\subsection{Well-posedness of learned physics}
To guarantee well-posedness when learning nonlinear physics of unknown functional form, we will pose in the following section a conservation law consisting of a non-linear perturbation of a Laplacian. Similar to classical artificial viscosity methods, this will allow us to obtain control over the nonlinearity provided the diffusion is ``strong enough'' to dominate the nonlinearity. 

\begin{theorem}[Gustafsson \cite{gustafsson1995time}]\label{gustythm}
    Consider the nonlinear system of equations
    \begin{equation}\label{wellposedprob}
        \mathbf{A}x + \epsilon \mathbf{F}(x) = b,
    \end{equation}
    where $\epsilon>0$ and $\mathbf{F}$ is a vector-valued nonlinear function with Lipschitz constant $C_L$
    $$||\mathbf{F}(x) - \mathbf{F}(y)||_2 \leq C_L ||x-y||_2.$$
    Define $\tau = \epsilon C_L ||A^{-1}||$. If $\tau < 1$, then \eqref{wellposedprob} has a unique solution.
\end{theorem}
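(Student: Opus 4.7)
The plan is to recast \eqref{wellposedprob} as a fixed-point problem on $\mathbb{R}^n$ and apply the Banach contraction mapping theorem. Because the norm $\|A^{-1}\|$ appears in the definition of $\tau$, the statement implicitly assumes $\mathbf{A}$ is invertible, so I will first rewrite \eqref{wellposedprob} as
\begin{equation*}
    x = \mathbf{A}^{-1}\bigl(b - \epsilon \mathbf{F}(x)\bigr) =: T(x),
\end{equation*}
so that solutions of \eqref{wellposedprob} coincide exactly with fixed points of $T:\mathbb{R}^n \to \mathbb{R}^n$.

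Next I would verify that $T$ is a contraction on $(\mathbb{R}^n,\|\cdot\|_2)$. For any $x,y$,
\begin{equation*}
    \|T(x) - T(y)\|_2 = \bigl\|\mathbf{A}^{-1}\bigl(\epsilon \mathbf{F}(y) - \epsilon \mathbf{F}(x)\bigr)\bigr\|_2 \le \epsilon\,\|\mathbf{A}^{-1}\|\,\|\mathbf{F}(x) - \mathbf{F}(y)\|_2 \le \epsilon\,\|\mathbf{A}^{-1}\|\,C_L\,\|x - y\|_2 = \tau\,\|x - y\|_2,
\end{equation*}
where the first inequality uses the operator-norm bound and the second uses the Lipschitz hypothesis on $\mathbf{F}$. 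The assumption $\tau < 1$ then makes $T$ a strict contraction.

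Finally, since $\mathbb{R}^n$ is a complete metric space, the Banach fixed point theorem yields a unique $x^\star \in \mathbb{R}^n$ with $T(x^\star) = x^\star$, which is precisely the unique solution of \eqref{wellposedprob}. There is no real obstacle here: the whole argument hinges on reading $\tau < 1$ as the contraction constant of $T$, and the only subtlety worth flagging is that $\mathbf{A}$ must be invertible for the reformulation to make sense — a condition already baked into the hypothesis by the appearance of $\|\mathbf{A}^{-1}\|$. If desired, one could additionally record the a priori bound $\|x^\star\| \le \|\mathbf{A}^{-1}\|\,\|b\|/(1-\tau) + \|\mathbf{A}^{-1}\|\,\epsilon\,\|\mathbf{F}(0)\|/(1-\tau)$ obtained from the standard Banach estimate, which is useful downstream for stability analysis of the learned conservation law.
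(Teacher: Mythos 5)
Your proof is correct: recasting \eqref{wellposedprob} as the fixed-point problem $x=T(x)$ with $T(x)=\mathbf{A}^{-1}(b-\epsilon\mathbf{F}(x))$, showing $T$ is a $\tau$-contraction in $\|\cdot\|_2$, and invoking Banach's fixed-point theorem is exactly the standard argument, and it is the one used in the cited reference (Gustafsson, Appendix A.3). The paper itself does not reproduce the proof but merely cites that reference, so your blind reconstruction matches the intended argument; the a priori bound you record at the end is a sensible bonus, consistent with how the corollary and remark following the theorem are used downstream.
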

\begin{proof}
    See \cite{gustafsson1995time}, Appendix A.3.
\end{proof}
\begin{corollary}
Assume $\mathbf{A}$ is invertible and satisfies the Poincare-like inequality
\begin{equation}
    ||x||_2^2 \leq C_p x^\intercal \mathbf{A} x
\end{equation}
then $\tau \leq \epsilon C_p C_L$, and following Theorem \ref{gustythm}, \eqref{wellposedprob} has a solution if $\epsilon C_p C_L < 1$.
\end{corollary}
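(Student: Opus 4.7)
The plan is to show that the Poincaré-like hypothesis on $\mathbf{A}$ implies the operator-norm bound $||\mathbf{A}^{-1}||_2 \leq C_p$, which then reduces the corollary to a direct invocation of Theorem~\ref{gustythm}. The only substantive work is thus converting a coercivity-type estimate into an operator-norm estimate on the inverse.

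First, I would fix an arbitrary $y \in \R^n$ and substitute $x = \mathbf{A}^{-1} y$ into the Poincaré-like inequality, noting that $\mathbf{A}x = y$ by construction. This gives
$$||\mathbf{A}^{-1} y||_2^2 \;\leq\; C_p \, (\mathbf{A}^{-1} y)^\intercal \mathbf{A} (\mathbf{A}^{-1} y) \;=\; C_p \, (\mathbf{A}^{-1} y)^\intercal y.$$
Applying the Cauchy--Schwarz inequality to the right-hand side yields
$$||\mathbf{A}^{-1} y||_2^2 \;\leq\; C_p \, ||\mathbf{A}^{-1} y||_2 \, ||y||_2.$$
Dividing through by $||\mathbf{A}^{-1} y||_2$ (the case $y=0$ being trivial, since then $\mathbf{A}^{-1}y = 0$) gives $||\mathbf{A}^{-1} y||_2 \leq C_p ||y||_2$, and taking the supremum over unit $y$ produces the desired operator-norm bound $||\mathbf{A}^{-1}||_2 \leq C_p$.

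With this bound in hand, the definition $\tau = \epsilon C_L ||\mathbf{A}^{-1}||$ from Theorem~\ref{gustythm} immediately gives $\tau \leq \epsilon C_p C_L$. The hypothesis $\epsilon C_p C_L < 1$ then enforces $\tau < 1$, and Gustafsson's theorem produces the desired unique solution to \eqref{wellposedprob}.

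The only subtlety, rather than a genuine obstacle, is that the Poincaré-like inequality involves the (possibly non-symmetric) quadratic form $x^\intercal \mathbf{A} x$. This causes no issue in the duality step, since the scalar $(\mathbf{A}^{-1} y)^\intercal \mathbf{A} (\mathbf{A}^{-1} y)$ collapses cleanly to $(\mathbf{A}^{-1} y)^\intercal y$ via $\mathbf{A}(\mathbf{A}^{-1}y) = y$, without requiring symmetrization of $\mathbf{A}$ or positivity of its symmetric part. Everything else is a single application of Cauchy--Schwarz plus direct substitution into Theorem~\ref{gustythm}.
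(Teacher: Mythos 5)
Your proof is correct, and it takes a genuinely different (and arguably cleaner) route than the paper. The paper's proof invokes the Loewner ordering and the operator monotonicity of the matrix inverse (citing Horn and Johnson), reasoning that $x^\intercal x \leq C_p\, x^\intercal \mathbf{A} x$ implies $x^\intercal \mathbf{A}^{-1} x \leq C_p\, x^\intercal x$ and hence $\|\mathbf{A}^{-1}\| \leq C_p$. That chain is sound but tacitly relies on $\mathbf{A}$ being \emph{symmetric}: the Loewner order and inverse-monotonicity are defined on Hermitian matrices, and the final step from the quadratic-form bound on $\mathbf{A}^{-1}$ to its operator norm also requires symmetry. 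This is harmless in the paper's setting, since the $\mathbf{A}$ of interest is the (symmetric) discrete Hodge Laplacian $\mathbf{L} = \delta^\intercal \mathbf{M}_1 \delta$, but it is an implicit restriction. Your argument --- substitute $x = \mathbf{A}^{-1}y$ so the quadratic form collapses to $(\mathbf{A}^{-1}y)^\intercal y$, then apply Cauchy--Schwarz and divide --- uses nothing beyond coercivity of the (possibly nonsymmetric) bilinear form and establishes $\|\mathbf{A}^{-1}\|_2 \leq C_p$ directly, with no appeal to matrix monotonicity. It is more elementary, requires no external citation, and is strictly more general, covering nonsymmetric $\mathbf{A}$ with positive-definite symmetric part (a relevant case given that the paper's Remark notes the Laplacian is only positive \emph{semi}-definite and that the nonlinear term need not be symmetric). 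The concluding invocation of Theorem~\ref{gustythm} is identical in both.
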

\begin{proof}
    See Horn and Johnson \cite{horn2012matrix} for a discussion of Loewner ordering and monotonicity of the matrix inverse. By Loewner ordering, 
    $$ x^\intercal x \leq C_p x^\intercal \mathbf{A} x \implies x^\intercal \mathbf{A}^{-1} x \leq C_p x^\intercal x,$$
    and we can bound $||\mathbf{A}^{-1}|| \leq C_p$.
\end{proof}

In the context of Whitney forms, we construct an $\mathbf{A}$ meeting these conditions by identifying a discrete Laplacian operator with an associated Poincare inequality.

\begin{theorem}
    Denote by $\delta_{i,jk} = \delta_{ij} - \delta_{ik}$ the graph gradient operator mapping $\mathbb{R}^{N}\rightarrow \mathbb{R}^{N\times N}$, and $\mathbf{M}_i$ the mass matrix associated with $\mathcal{W}_i$. Then the following identities hold, for $q,u \in \mathcal{W}_0$ and $v,J \in \mathcal{W}_1$.
    $$(v,\nabla u) = \hat{v}^\intercal \mathbf{M}_1 \delta \hat{u}$$
    $$-(J,\nabla q) = \hat{q}^\intercal \delta^\intercal \mathbf{M}_1 \hat{J}$$
    Consider the Poisson equation in mixed form, namely, find $(u,J) \in \mathcal{W}_0\times \mathcal{W}_1$ such that for any $(q,v) \in \mathcal{W}_0\times \mathcal{W}_1$
    $$ (J,v) + (\nabla u,v) = 0 $$
    $$ (J,\nabla q) = <J,q> + (f,q).$$
    If $J \cdot \hat{n}|_{\partial\Omega} = 0$, the resulting system of equations
    $$\mathbf{L} \hat{u} := \delta^\intercal \mathbf{M}_1 \delta \hat{u} = \mathbf{M}_0 \hat{f}$$
    has a unique solution modulo a constant vector in the null-space, and the stiffness matrix $\mathbf{L}$ corresponding to the discretized Hodge Laplacian is symmetric positive definite with a Poincare inequality $$\hat{u}^\intercal \mathbf{M}_0 \hat{u} \leq C_p(h)  \hat{u}^\intercal \mathbf{L} \hat{u}.$$ 
\end{theorem}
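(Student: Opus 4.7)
The plan is to prove the two inner‑product identities first, since everything else in the statement reduces to algebraic manipulation once those are in hand. For the first identity, I would expand $u = \hat{u}_i \lambda_i$ so that $\nabla u = \hat{u}_i \nabla \lambda_i$ and exploit the partition of unity properties $\sum_j \lambda_j = 1$ and $\sum_i \nabla \lambda_i = 0$ to rewrite $\nabla u$ in the Whitney 1-form basis. Concretely, multiplying by $\sum_j \lambda_j$ and antisymmetrizing over $(i,j)$ gives $\nabla u = \tfrac{1}{2}(\hat{u}_j - \hat{u}_k)\psi^1_{jk}$, so the coefficients of $\nabla u$ in the $\psi^1$ basis are exactly $(\delta\hat{u})_{jk}$. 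Pairing against an arbitrary $v\in\mathcal{W}_1$ and recognizing $(\psi^1_\alpha,\psi^1_\beta) = (\mathbf{M}_1)_{\alpha\beta}$ produces $(v,\nabla u)=\hat{v}^\intercal \mathbf{M}_1 \delta\hat{u}$. The second identity then follows by transposing the first (which is a scalar) and using symmetry of $\mathbf{M}_1$, up to the global sign carried by the $\delta^\intercal$ convention and the vanishing of the boundary trace.

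Next I would reduce the mixed system to a primal one. Testing $(J,v)+(\nabla u,v)=0$ against the full basis of $\mathcal{W}_1$ and invoking invertibility of $\mathbf{M}_1$ yields $\hat{J} = -\delta\hat{u}$, i.e.\ the Galerkin projection of $\nabla u$ is exact thanks to surjectivity of $d_0$. Substituting into the second equation, using the boundary condition $J\cdot\hat{n}|_{\partial\Omega}=0$ to drop the trace $\langle J,q\rangle$, and applying the second identity gives
\begin{equation}
\hat{q}^\intercal \, \delta^\intercal \mathbf{M}_1 \delta \, \hat{u} \;=\; \hat{q}^\intercal \mathbf{M}_0 \hat{f} \qquad \forall\, \hat{q},
\end{equation}
which is the claimed system $\mathbf{L}\hat{u} = \mathbf{M}_0 \hat{f}$.

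For the SPD structure, symmetry is immediate from $\mathbf{L}^\intercal = \delta^\intercal \mathbf{M}_1^\intercal \delta = \mathbf{L}$, and positive semidefiniteness follows from $\hat{u}^\intercal \mathbf{L}\hat{u} = (\delta\hat{u})^\intercal \mathbf{M}_1 (\delta\hat{u}) = \|\nabla u\|_{L^2(\Omega)}^2 \geq 0$ together with SPD-ness of $\mathbf{M}_1$. The null space is characterized by $\nabla u = 0$, so on a connected domain $u$ must be constant; by the partition of unity $u \equiv c$ is represented by the uniform coefficient vector $\hat{u}_i = c$, giving a one-dimensional kernel. Uniqueness modulo this kernel follows, with existence under the Fredholm compatibility $\mathbf{1}^\intercal \mathbf{M}_0 \hat{f} = 0$. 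Finally, the Poincaré inequality is inherited from the continuous one on $H^1(\Omega)/\mathbb{R}$: since $\mathcal{W}_0$ is a conforming subspace, for any $\hat{u}$ in the complement of the constants one has $\|u\|_{L^2}^2 \leq C_p \|\nabla u\|_{L^2}^2$, and rewriting both sides with the mass-matrix identities yields $\hat{u}^\intercal \mathbf{M}_0 \hat{u} \leq C_p(h)\, \hat{u}^\intercal \mathbf{L}\hat{u}$.

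The principal obstacle is the first step: obtaining the clean expansion $\nabla u = (\delta\hat{u})_{jk}\psi^1_{jk}$ requires careful bookkeeping of the antisymmetric double-index structure of the Whitney 1-form basis and the accompanying factor of $\tfrac{1}{2}$, and is the place where the surjectivity of $d_0$ and the partition-of-unity relations do the real work. Once that identity is established, the reduction of the mixed problem and the SPD/Poincaré conclusions are essentially formal consequences of standard conforming Galerkin theory.
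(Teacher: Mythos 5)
Your proposal is correct in substance and the first half (deriving the two identities and reducing the mixed system to $\mathbf{L}\hat{u}=\mathbf{M}_0\hat{f}$) matches the paper's proof closely: the paper also uses $\sum_k\lambda_k=1$ and $\sum_k\nabla\lambda_k=0$ to rewrite $\nabla\lambda_j=\sum_k(\lambda_k\nabla\lambda_j-\lambda_j\nabla\lambda_k)$ and thereby express $\nabla u$ in the $\psi^1$ basis, though it works at the level of a single $\nabla\lambda_j$ rather than via your $\frac12$-antisymmetrization of $\nabla u$. Where you genuinely diverge is the well-posedness: the paper dispatches this by recognizing $\mathbf{L}=\delta^\intercal\mathbf{M}_1\delta$ as a weighted graph Laplacian with edge weights supplied by $\mathbf{M}_1$, then invoking Theorem~3.4 of the cited prior work (with $\mathbf{B}_0,\mathbf{B}_1=I$, $\mathbf{D}_i=\mathbf{M}_i$), whereas you argue from scratch: symmetry from $\mathbf{M}_1^\intercal=\mathbf{M}_1$, positive semidefiniteness from $\hat{u}^\intercal\mathbf{L}\hat{u}=\|\nabla u\|_{L^2}^2$, kernel = constants on a connected domain, and the Poincar\'e inequality inherited from the continuous $H^1/\mathbb{R}$ estimate via conformity. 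Both routes are sound; the paper's buys brevity and consistency with the broader data-driven exterior calculus framework, while yours is self-contained and makes the Galerkin structure transparent. Two small cautions: (i) you need not invoke invertibility of $\mathbf{M}_1$ to eliminate the flux --- the Galerkin statement already gives $\mathbf{M}_1\hat{J}=-\mathbf{M}_1\delta\hat{u}$, and it is only the product $\mathbf{M}_1\hat{J}$ that enters the second equation, so the reduction goes through even when the double-indexed $\psi^1_{jk}$ are linearly dependent and $\mathbf{M}_1$ is singular; (ii) the coefficient identification $\nabla u=\tfrac12(\delta\hat{u})_{jk}\psi^1_{jk}$ requires vigilance about orientation and sign conventions (the paper's own two displayed identities carry an apparent relative sign that does not follow by simply transposing the first), and when invoking the continuous Poincar\'e inequality you should note that the discrete null vector $\hat{u}\propto\mathbf{1}$ enforces $u\equiv\mathrm{const}$, not the $L^2$-mean-zero normalization, so a trivial shift is needed before the continuous estimate applies.
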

\begin{proof}
For the first identity, $(v,\nabla u)=\sum_{ij}\hat{v}_i (\psi^1_i,\nabla \lambda^0_j) \hat{u}_j = \sum_{ijk}\hat{v}_i (\psi^1_i,\lambda^0_k\nabla \lambda^0_j-\lambda^0_j\nabla \lambda^0_k) \hat{u}_j = \hat{v}^\intercal \mathbf{M}_1 \delta \hat{u}$, with the second following similarly. The definition of the discrete Laplacian $\mathbf{L}$ follows from direct substitution of the identities into the bilinear form. Well-posedness follows by identifying $\mathbf{L}$ as a weighted graph Laplacian with weights given by $\mathbf{M}_1$. The remaining proof follows from \cite{trask2022enforcing}, taking $\mathbf{B}_0$ and $\mathbf{B}_1$ as identity matrices, $\mathbf{D}_0$ and $\mathbf{D}_1$ as $\mathbf{M}_0$ and $\mathbf{M}_1$ and applying Theorem 3.4.
\end{proof}
\begin{remark}
    The above matrix does not precisely satisfy the conditions of Theorem \ref{gustythm}, as the matrix is only positive semidefinite and the inverse does not exist; the Laplacian has a constant vector in its null-space. Formally, the Loewner ordering only holds for the \textit{pseudo-inverse}, $\mathbf{A}^\dagger \leq C_p I$. In practice, we will need to stabilize the null-space of $\mathbf{A}$ to guarantee stability.
\end{remark}

\subsection{Construction of coarse-grained de Rham complex}
In contrast to traditional FEEC in which Whitney forms are constructed by first defining $\lambda$ as a barycentric interpolant and then constructing spaces via \eqref{whitneyspaces}, our work instead uses a transformer to construct a space of Whitney 0-forms \eqref{eq:W0} which possess the necessary partition of unity property. The construction will use the following fact that partitions of unity are closed under convex combinations.

\begin{theorem}\label{convexpouthm}
    Let $\lambda_i$, $i = 1,\dots,N$ define a partition of unity satisfying $\lambda_i \geq 0$ for all $i$ and $\sum_i \lambda_i = 1$. Let
    \begin{equation}
        \psi^0_i = \sum_a W_{ia} \lambda_a (x),
    \end{equation}
    where the convex combination tensor $W$ has positive entries ($W_{ia}>0$) and unity row sum ($\underset{i}{\sum}W_{ia} = 1$). Then the space $\mathcal{W}_0 = \operatorname{span} \big\{\, \psi^0_i \big\}$ forms a partition of unity.
\end{theorem}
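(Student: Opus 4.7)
The plan is to verify the two defining properties of a partition of unity for the functions $\psi^0_i$ directly from the hypotheses on $W$ and $\lambda$, exchanging the order of summation in the second property.

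First I would establish nonnegativity. Since $W_{ia} > 0$ by hypothesis and $\lambda_a(x) \geq 0$ pointwise for each $a$, the defining sum
\begin{equation}
\psi^0_i(x) = \sum_a W_{ia}\, \lambda_a(x)
\end{equation}
is a nonnegative combination of nonnegative functions, hence $\psi^0_i(x) \geq 0$ for all $x \in \Omega$ and every $i$.

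Next I would verify the unity sum property by swapping the two summations and applying the column-sum condition on $W$ followed by the partition-of-unity property of $\lambda$:
\begin{equation}
\sum_i \psi^0_i(x) = \sum_i \sum_a W_{ia}\, \lambda_a(x) = \sum_a \Bigl(\sum_i W_{ia}\Bigr)\, \lambda_a(x) = \sum_a \lambda_a(x) = 1.
\end{equation}
The finiteness of both sums justifies the interchange without any subtlety.

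There is no real obstacle here; the result is essentially a one-line computation. The only point worth emphasizing in the write-up is that the condition $\sum_i W_{ia} = 1$ is a \emph{column} sum (sum over the first index of $W$ for each fixed $a$), which is precisely what is needed for the exchange-of-summation step to collapse the expression back onto $\sum_a \lambda_a$. If one instead had a row-sum condition $\sum_a W_{ia} = 1$, the same argument would fail, so the statement of the hypothesis should be read carefully before invoking it.
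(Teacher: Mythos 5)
Your proof is correct and follows essentially the same route as the paper's: nonnegativity from positivity of $W$ and $\lambda$, then swap the finite sums and apply $\sum_i W_{ia} = 1$ to collapse onto $\sum_a \lambda_a = 1$. Your parenthetical observation that the condition $\sum_i W_{ia} = 1$ is really a column-sum constraint (in the usual row-then-column index convention for $W_{ia}$), despite the paper labeling it a ``row sum,'' is a fair reading of the hypothesis and worth noting, but it does not change the substance of the argument.
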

\begin{proof}
    Trivially, $\psi^0_i \geq 0$ by positivity of $\lambda$ and $W$. Then, $\underset{i}{\sum}\psi^0_i = \underset{ia}{\sum}W_{ia} \lambda_a = \underset{a}{\sum} \left(\underset{i}{\sum}W_{ia} \right)\lambda_a = \underset{a}{\sum} \lambda_a =1$.
\end{proof}
We refer to $\lambda$ and $\psi^0$ as \textit{fine-scale} and \textit{coarse-scale} Whitney forms, respectively. While our previous work focused on learning the entries of $W$ directly \cite{actor2024data}, the current approach uses a cross-attention transformer to modulate $W$ in response to the conditioning variable $Z$, providing finite element spaces that adapt directly to the conditioning variable while ensuring through \eqref{whitneyspaces} that the downstream de Rham complex is appropriately constructed. As shown in \cite{trask2022enforcing}, the $\mathcal{W}_1$ and $\mathcal{W}_2$ spaces can be used for preserving structure related to the involution condition in electromagnetism, the current work will exclusively work with $\mathcal{W}_0$ and $\mathcal{W}_1$ for simplicity.

\section{Learning Problem}

To treat boundary conditions, we decompose the Lipschitz boundary $\partial \Omega = \Gamma_N \cup \Gamma_D$, where $\Gamma_N$ and $\Gamma_D$ are disjoint partitions associated with Neumann and Dirichlet conditions, respectively. We assume that our data satisfies the conservation law
\begin{equation}\label{eq:strongforma}
    \partial_t u_\theta + \nabla \cdot \mathbf{F}_\theta = f,
\end{equation}
where $u_\theta \in \mathbb{R}^N$ is a density of $N$ conserved quantities and $\mathbf{F}_\theta$ is a corresponding flux. We denote by $\theta$ the dependency of these terms on both machine-learnable parameters and the conditioning variable $Z$; for brevity we will omit $\theta$ dependency unless necessary. We similarly denote the convex combination tensor in Theorem \ref{convexpouthm} as $W_\theta$ when it is trainable. The conservation law ansatz yields the following global conservation principle,
\begin{equation}
    \frac{d}{dt} \int_\Omega u_\theta dx = \int_\Omega f dx - \int_{\partial \Omega} \mathbf{F} \cdot d\mathbf{A},
\end{equation}
and thus integrals of $u$ are preserved in the setting where $f = 0$ and $\mathbf{F}\cdot \mathbf{n}|_{\partial \Omega} = 0$. All learnable physics are embedded in the flux equation
\begin{equation}\label{eq:strongformb}
    \mathbf{F} = - \epsilon \nabla u + \mathcal{N}[u;\theta]
\end{equation}
where $\mathcal{N}$ is a generic nonlinear operator with no assumed special structure, and $\epsilon$ denotes a trainable amplitude of the Laplacian which will stabilize the learned physics.

To obtain a discrete system, we consider trainable \eqref{whitneyspaces}, $\mathcal{W}_0^\theta$ and $\mathcal{W}_1^\theta$, and define the following mixed Galerkin form for \eqref{eq:strongforma} and \eqref{eq:strongformb}. Let $(u,\mathbf{F}) \in \mathcal{W}_0^\theta  \times \mathcal{W}_1^\theta$ such that for all $(q,\mathbf{v}) \in \mathcal{W}_0^\theta  \times \mathcal{W}_1^\theta$
\begin{subequations} \label{eq:WeakForm}
\begin{align}
\frac{d}{dt} (u,q) - (\mathbf{F}, \nabla q) &= (f, q) + \langle \mathbf{F}_N, \nabla q \rangle, \label{eq:WeakFormA} \\[6pt]
(\mathbf{F}, \mathbf{v}) &= (\nabla u, \mathbf{v}) + (\mathcal{N}[u], \mathbf{v}). \label{eq:WeakFormB}
\end{align}
\end{subequations}
where we assume Neumann conditions $\mathbf{F}\cdot \mathbf{n}|_{\Gamma_N} = F_N$, and Dirichlet conditions may be imposed by a standard lifting procedure, decomposing the solution into a homogeneous and inhomogeneous part $u = u_0 + u_D$, where $u_0|_{\Gamma_D} = 0$, so that $u|_{\Gamma_D} = u_D$. For simplicity of notation, we drop the subscript on $u_0$ and lump body forces and boundary condition contributions into a generic right-hand side $\mathbf{f}_\theta$, highlighting the fact that parametric forcing may be considered.

Following discretization, we obtain the dynamics
\begin{subequations} \label{eq:discreteWeakForm}
\begin{align}
\frac{d}{dt} \mathbf{M}_0 \hat{u} + \delta^\intercal_0 \mathbf{M}_1 \hat{F} = \mathbf{f}_\theta, \label{eq:discreteWeakForm1} \\[6pt]
\mathbf{M}_1 \hat{F} + \mathbf{M}_1 \delta_0 \hat{u} + \mathbf{M}_1 \mathcal{NN}[u;\theta,Z] = 0.
\label{eq:discreteWeakForm2}
\end{align}
\end{subequations}
where we have replaced the nonlinear term with a neural network that maps directly from zero-form degrees of freedom to one-form degrees of freedom, 
\begin{equation}\label{nonlinearity}
    \mathcal{NN}: \mathbb{R}^{dim(\mathcal{W}_0)\times N} \rightarrow \mathbb{R}^{dim(\mathcal{W}_1)\times N}.
sho\end{equation}
We postpone a precise specification of $\mathcal{NN}$ to the following section. By eliminating $\hat{F}$, we obtain the final discretized form
\begin{equation}\label{eq:discrete3}
\mathcal{F}(\hat{u};\theta,Z) := \delta_0^\intercal \mathbf{M}_1 \delta_0 \hat{u} + \delta_0^\intercal \mathbf{M}_1 \mathcal{NN}[\hat{u}] = \mathbf{f}_\theta.
\end{equation}

To construct a loss function for training, we pose an equality constrained quadratic program aligning the solution $u$ toward a target solution $u^{target}$ specified on a point cloud.
\begin{equation}
\underset{\hat{u},\lambda,\theta}{\text{argmin}} \mathcal{L} = \underset{\hat{u},\lambda,\theta}{\text{argmin}} \frac12 \sum_d ||u(x_d) - u^{target}(x_d)||^2 + \lambda^\intercal \left[ \mathcal{F}(\hat{u};\theta,z) - \mathbf{f}_\theta\right],
\end{equation}
where $\lambda \in \mathbb{R}^{dim(W^0)}$ is a Lagrange multiplier. We note that many different alternative reconstruction losses may be employed (e.g. directly reconstructing Whitney form DOFs, targeting flux degrees of freedom, etc.).

The Karush-Kuhn-Tucker conditions yield the necessary conditions at a local minimizer,
\begin{equation}
    \delta_{\hat{u}} \mathcal{L} = \delta_\lambda \mathcal{L} = \delta_\theta \mathcal{L} = 0.
\end{equation}

On the $k^{th}$ step of training, given a dataset $\mathcal{D} = (u^{\text{target}}_s,Z_s)_{s=1}^{N_{\text{solutions}}}$ thus consists of selecting a random solution index $s$ and sequentially solving the forward and adjoint problems for the current iterate of the learned model:
\begin{subequations} \label{eq:forwardadjoint}
\begin{align}
\mathcal{F}[\hat{u}_s^{(k)};\theta^{(k)},Z_s] = \mathbf{f}_\theta, \label{eq:forwardadjoint1} \\[6pt]
\left( \partial_{\hat{u}} \mathcal{F}\left[\hat{u}_s^{(k)};\theta^{(k)},Z_s\right]\right)^\intercal \lambda_s^{(k)} = -\sum_d\left(\hat{u}_s \cdot \psi_0(x_d)  - u^\text{target}(x_d)\right).
\label{eq:forwardadjoint2}
\end{align}
\end{subequations}

After obtaining $\hat{u}_s^{(k)}$ and $\lambda_s^{(k)}$, the model may be updated by employing a gradient-based optimizer to update $\theta^{(k)}$. In the current work, we employ the Shampoo optimizer \cite{gupta18a,shi2023distributed}. Automatic differentiation may be used to compute all required terms, including the Jacobian necessary to solve the forward problem with a Newton method and the adjoint matrix, though special care must be taken to exploit sparsity and achieve an efficient implementation.

\begin{remark}
    In light of Theorem \ref{gustythm}, we can guarantee for sufficiently small nonlinearity that a unique solution exists to the problem, and therefore that the feasible set of solutions satisfying the constraint in \eqref{eq:forwardadjoint} is nonempty, justifying our use of constrained optimization. As discussed in our prior work \cite{trask2022enforcing} for a smaller class of nonlinear elliptic problems, one could in principle estimate Poincare and Lipschitz constants and introduce constraints to guarantee the conditions of Theorem \ref{gustythm} are met. Empirically, when solving the learned systems with a standard Newton solve with backtracking \cite{armijo1966minimization} we are reliably able to obtain solutions over the entire course of training. As an alternative to using Lagrange multipliers in \eqref{eq:forwardadjoint2} in a full-space approach, one could instead adopt a reduced-space approach and back-propagate directly through the solution \cite{hinze2008optimization,biros2005parallel}.
\end{remark}

\begin{remark}[Non-invasive implementation]
In light of Theorem \ref{convexpouthm}, if $\mathcal{W}_0$ is constructed as a convex combination of Lagrange elements on simplices, the corresponding mass matrix of 0-forms may be calculated as $\mathbf{M}_0 = W^\intercal \mathbf{M}_{P_1} W$, where $\mathbf{M}_{P_1}$ is the fine-scale mass matrix which may be precomputed and stored. 

\begin{align*}
    \mathbf{M}_0 & = (\psi^0_i,\psi^0_j) = (W_{ia} \lambda_a,W_{jb} \lambda_b) = W_{ia} M_{ab} W_{jb}
\end{align*}

Similarly, the coarse-scale mass matrix for $1$-forms is obtained
from the fine-scale Nédélec mass matrix $M_{\mathrm{Ned}}$ via
\[
\mathbf{M}_1 = (W \otimes W)^\top \, \mathbf{M}_{\mathrm{Ned}} \, (W \otimes W),
\]
where $(W\otimes W)_{(pq),(ab)} = W_{pa} W_{qb}$ maps a coarse edge $(p,q)$
to its corresponding fine-scale edges $(a,b)$.
Here
\(
\mathbf{M}_{\mathrm{Ned}} = \bigl( \lambda_a \nabla \lambda_b - \lambda_b \nabla \lambda_a,
\lambda_c \nabla \lambda_d - \lambda_d \nabla \lambda_c \bigr)
\)
is the fine-scale Nédélec mass matrix.
In this manner, the coarsening for $k$-forms involves a $k$-fold Kronecker product,
so that all required mass and adjacency matrices in \eqref{eq:discrete3}
may be \textit{non-invasively} precomputed and stored, bypassing the need for quadrature. Therefore, any references to the underlying mesh and finite element space may be treated in a preprocessing step so that training consists of contractions of $W$ against sparse mass matrices.
\end{remark}
\subsection{Dirichlet boundary condition lift}\label{liftingsection}

To perform the Dirichlet condition lift alluded to earlier, we must decompose $u = u_0 + u_D$, where $u_0|_{\Gamma_D} = 0$, so that $u|_{\Gamma_D} = u_D$. We do this by partitioning $\mathcal{W}_0$ into subspaces supported exclusively on the boundary and the interior. We adopt the strategy from our previous work \cite{actor2024data} where we partition the convex combination tensor into a block diagonal structure. Let $N_{int}$ and $N_{bnd}$ denote the number of internal and boundary nodes, respectively, of the fine scale mesh, and let $M_{int}$ and $M_{bnd}$ denote the number of \textit{coarse} internal and boundary partitions, respectively. Then the convex combination matrix may be decomposed into
\begin{equation}
W= \begin{bmatrix} {W}^{\text{int}} & 0 \\ 0 & {W}^{\text{bnd}} \end{bmatrix},
\end{equation}
where ${W}^{\text{int}} \in \mathbb{R}^{M_{int}\times N_{int}}$ and ${W}^{\text{bnd}} \in \mathbb{R}^{M_{bnd}\times N_{bnd}}$. The decomposed subspaces may be defined by identifying $u_0$ with the first $M_{int}$ rows of W and $u_D$ with the last $M_{bnd}$ rows. Further explanation and visualization of internal and boundary Whitney forms may be found in \cite{actor2024data}.

\section{Transformer architecture}

The framework relies on construction of a transformer architecture which uses cross-attention to condition on $Z$ to build both $W_\theta$ (from Theorem \ref{convexpouthm}) and $\mathbf{F}_\theta$ (from \eqref{nonlinearity}). Specifically, we use a decoder-style transformer architecture composed entirely of cross-attention blocks, as illustrated in Figure \ref{fig:diagrams_grouped}. This consists of $N_{\text{blocks}}$ repeated cross attention blocks, which provides a single parameter to increase model capacity. This baseline architecture is used to construct both $W_\theta$ and $\mathbf{F}_\theta$ (See Figure \ref{fig:architecture_diagrams}). Further details regarding the specifics of the architecture, its relation to the broader transformer literature, and choices of hyperparameters can be found in \ref{app:nn_details}. While transformers are notoriously parameter-intensive, we remark that our architecture is comparatively parameter-efficient with all benchmarks using $400K-1.2M$ parameters. For comparison, general-purpose transformers such as BERT or ViT typically use ~100M+ parameters \cite{devlin2019bert,dosovitskiy2021image}, while operator-learning models for PDE surrogates like DeepONets or Fourier Neural Operators often range from $1\text{-}5M$ \cite{li2021fourier, wang2021learning,lu2021learning}. 

A key distinction between our version and the standard implementation \citep{AIAYN} is the lack of a self-attention block: this allows us to generate shape functions with very fine mesh inputs, since we avoid the standard $\mathcal{O}(N_\mathrm{nodes} \times N_\mathrm{nodes})$ self-attention complexity \cite{AIAYN} by instead employing an $\mathcal{O}(N_\mathrm{nodes} \times N_{z_\mathrm{tokens}})$ cross-attention operation. In principle, we can choose any value for $N_{z_\mathrm{tokens}}$, depending on how we choose to embed the conditioning information. In this paper, we set $N_{z_\mathrm{tokens}} = 1$ for simplicity.

\begin{figure}[t]
  \centering

  \begin{subfigure}[t]{0.48\linewidth}
    \centering
    \begin{minipage}[t][10.5cm][t]{\linewidth}
      \includegraphics[width=\linewidth]{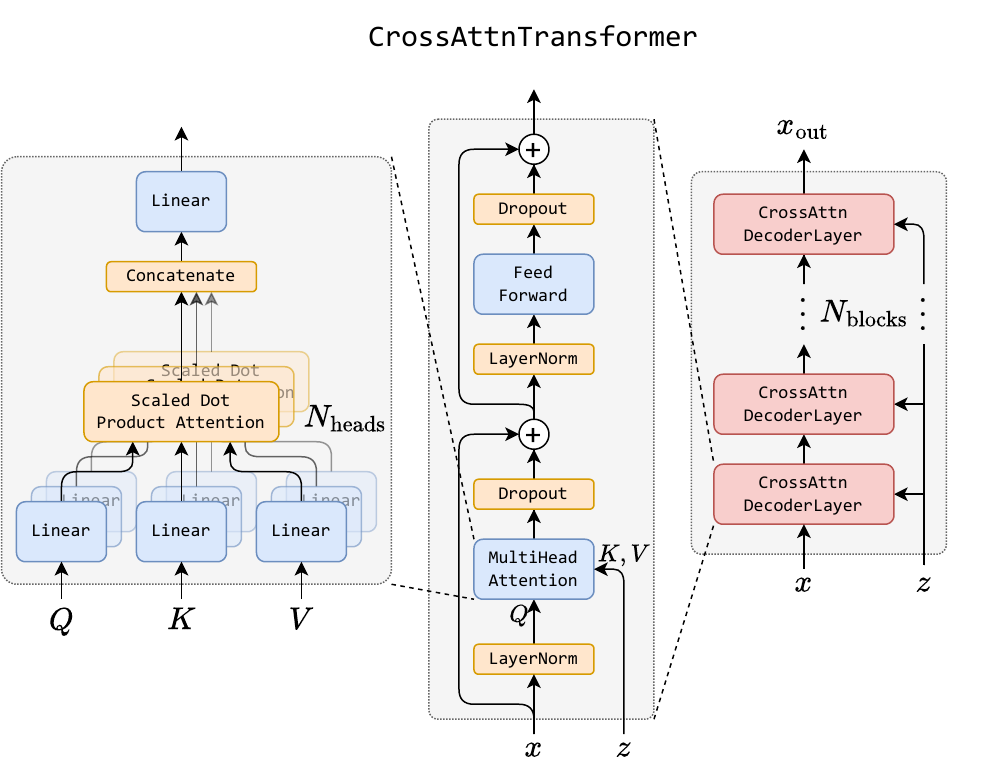}
      \caption{
        The multi-head attention implementation uses the built-in PyTorch module \texttt{torch.nn.MultiheadAttention}. The middle block is one layer of our \texttt{CrossAttnTransformer}: a standard decoding transformer without a self-attention component. Layer normalization is applied only over the final (per-token) dimension. The feedforward network is a two-layer MLP that expands and contracts the embedding dimension by a factor of 2, with a dropout layer after its activation. The \texttt{CrossAttnTransformer} is simply $N_\mathrm{blocks}$ of these layers stacked in sequence, each conditioned upon the same $z$.
      }
      \label{fig:diagrams_grouped}
    \end{minipage}
  \end{subfigure}
  \hfill
  \begin{subfigure}[t]{0.44\linewidth}
    \centering
    \begin{minipage}[t][10.5cm][t]{\linewidth}
      \includegraphics[width=\linewidth]{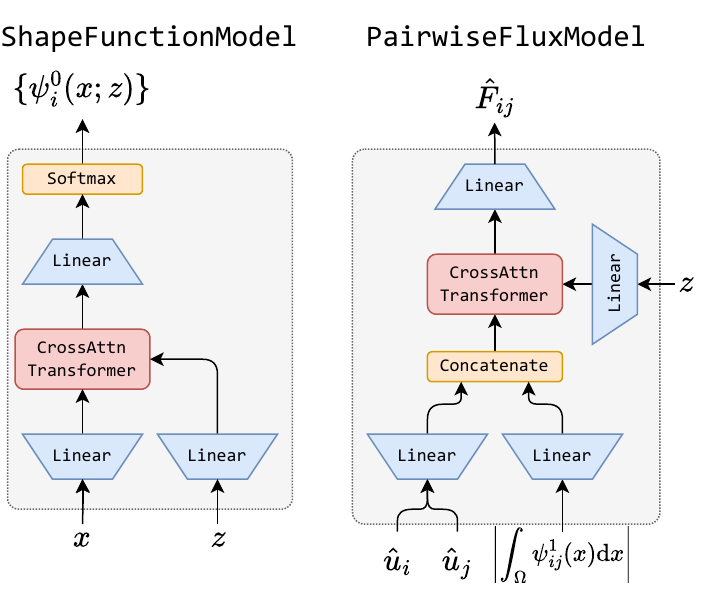}
      \caption{
        Left: The \texttt{ShapeFunctionModel} decodes a spatial coordinate $x$, given a conditioning parameter $Z$, into a set of $N_\mathrm{POU}$ coarse shape function values at $x$ that sum to unity by construction; these values prescribe $W_{ia} = \texttt{ShapeFunctionModel}_i (x_a;Z)$ from Theorem \ref{convexpouthm} when the neural operator is evaluated on nodes. Right: The \texttt{PairwiseFluxModel} constructs a latent representation of each state variable $\hat{u}$, then creates tokens by pairing these latent representations with a measure of the area between partitions; these tokens are decoded with respect to $Z$ into the flux coefficient between the two partitions. The same architecture is \textit{weight-shared} across all partition interfaces.
      }
      \label{fig:architecture_diagrams}
    \end{minipage}
  \end{subfigure}

  \caption{
    Overview of our transformer architecture. 
    (a) Multi-head cross-attention block and \texttt{CrossAttnTransformer}. 
    (b) \texttt{ShapeFunctionModel} and \texttt{pairwiseFluxModel} constructed using stacked \texttt{CrossAttnTransformer} blocks, providing \textit{resolution-independent} parameterizations of POUs.
  }
  \label{fig:transformer_overview}
\end{figure}

The neural network used to produce the shape functions $\{\psi^0_i(x; z)\}$, \texttt{ShapeFunctionModel} is evaluated on each node of the fine mesh in order to produce the transformation matrix $W$; i.e., $W_{ij} = \psi^0_i(x_j; z)$, where $i$ indexes the coarse shape functions and $j$ indexes the nodes of the fine mesh. We emphasize that the output of the neural network varies continuously with $x$; it is not confined to the nodes of the fine mesh. In practice, the same underlying problem geometry could be re-meshed or iteratively refined, and the coarse finite element basis learned on a different mesh would still work. In other words, our method is \textit{discretization-independent}. 

The model used for the flux, \texttt{PairwiseFluxModel}, maps the state variables on pairs of partitions, $\hat{u}_i$ and $\hat{u}_j$, to the flux between them, $\hat{F}_{ij}$. One single network is applied across all interfaces in a \textit{weight-sharing} configuration similar to CNNs \cite{lecun1998gradient,goodfellow2016deep}. To generalize across interfaces of varying surface area, we augment the inputs with a measure of the directed area between partitions. In concert, these features provide both desirable sample complexity, as the parameters do not scale with the number of partitions or interfaces, and identification of generalizable physics, as the same flux-state relationship must generalize across all interfaces as geometry evolves during training. 

%
\begin{figure}[t]
  \centering

  \begin{subfigure}[t]{0.6\linewidth}
    \centering
    \includegraphics[width=\linewidth]{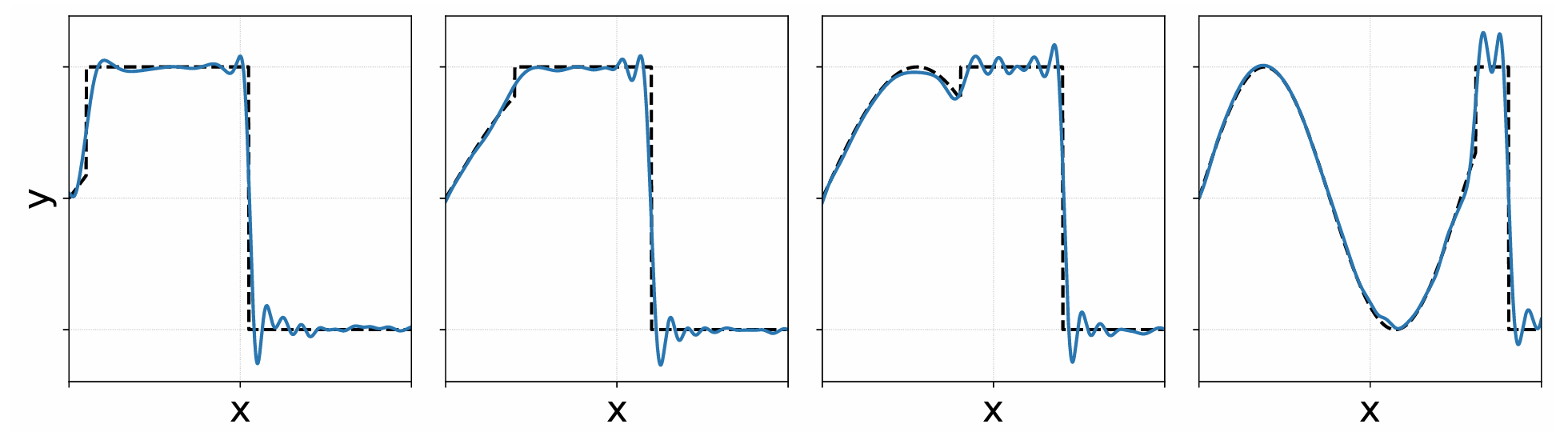}
    \caption{Representative reconstructions for a vanilla DeepONet model (395,776 parameters).}
    \label{fig:regression_deepo}

    \vspace{0.5em}

    \includegraphics[width=\linewidth]{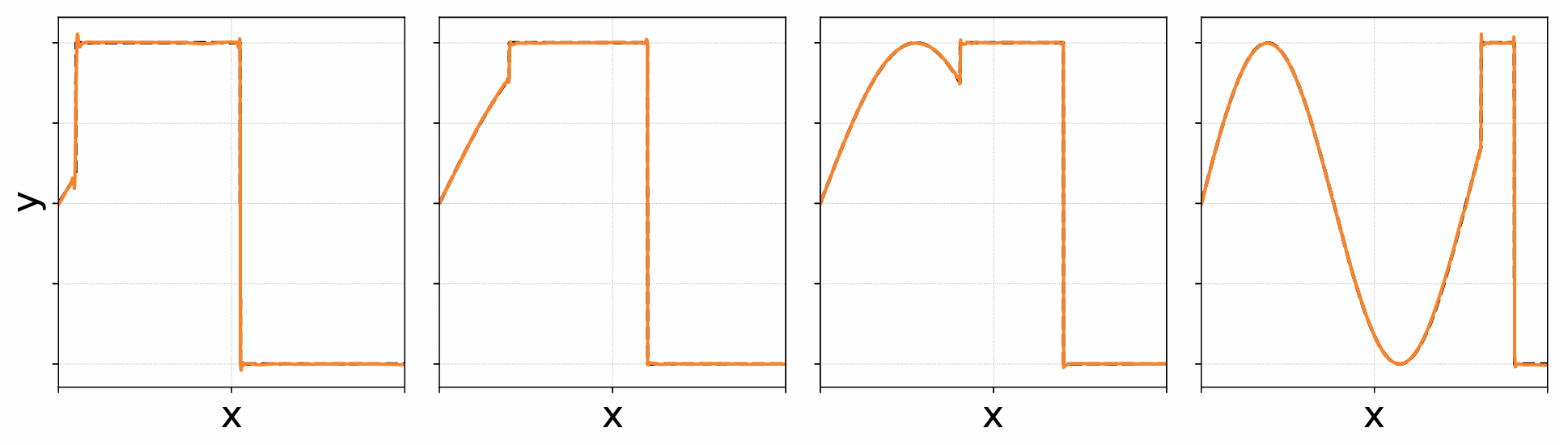}
    \caption{Representative reconstructions for proposed cross attention transformer (398,081 parameters).}
    \label{fig:regression_transformer}
  \end{subfigure}
  \hfill
\begin{subfigure}[t]{0.33\linewidth}
  \centering
  \raisebox{-3.5cm}{%
    \includegraphics[width=\linewidth]{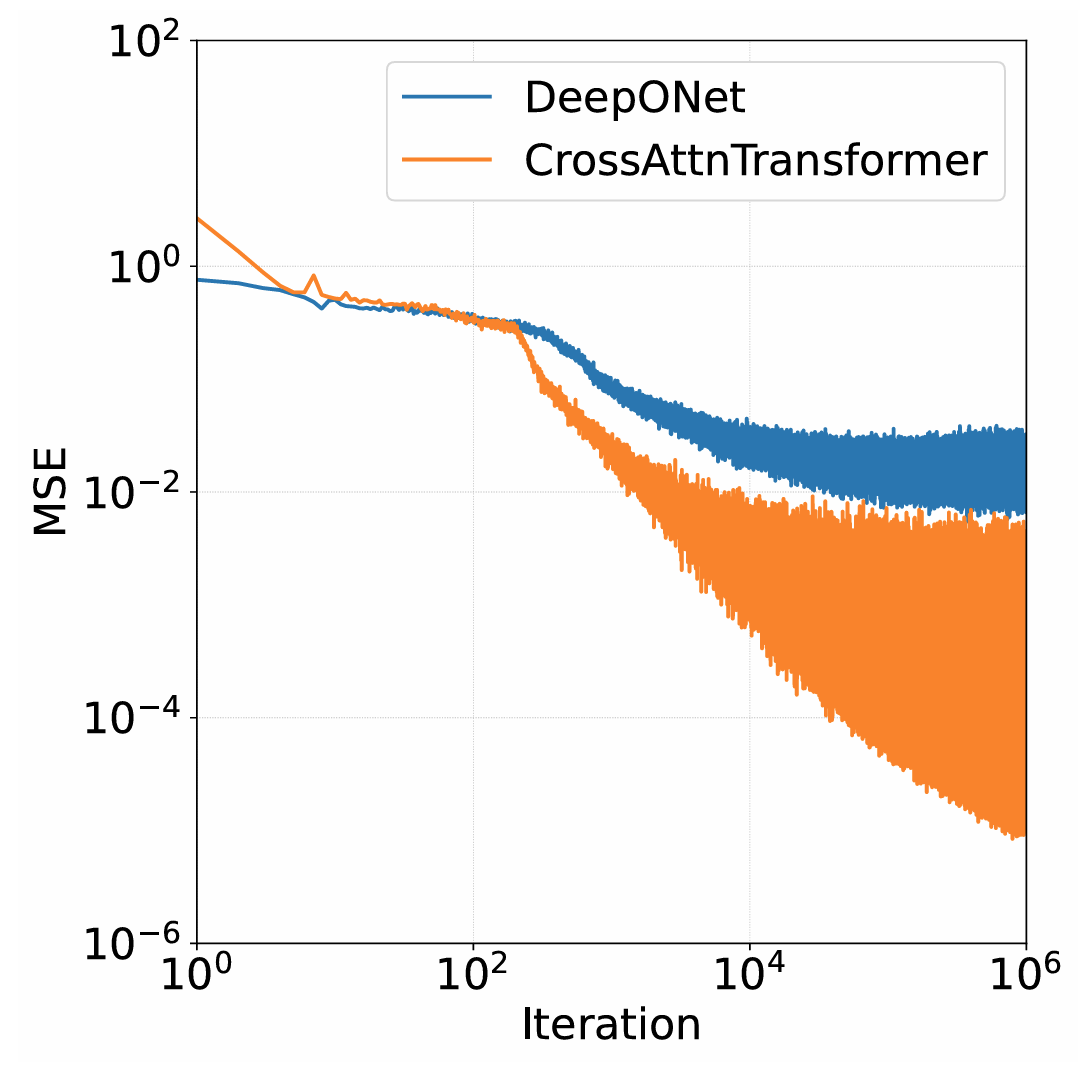}
  }
  \caption{Convergence during training for both, demonstrating a three order-of-magnitude reduction in loss in the infinite data limit.}
  \label{fig:regression_convergence}
\end{subfigure}

  \caption{
    Informal comparison of performance for a conditional regression benchmark between the proposed conditioning architecture and a vanilla DeepONet.  Representative plots spanning discontinuous to continuous data are shown in (a,b) for $Z \in \left\{
\begin{bmatrix} 3.45 \\ 0.05 \end{bmatrix},
\begin{bmatrix} 4.41 \\ 0.20 \end{bmatrix},
\begin{bmatrix} 5.68 \\ 0.40 \end{bmatrix},
\begin{bmatrix} 8.21 \\ 0.80 \end{bmatrix}
\right\}$ with true and reconstructed profiles as dashed and solid lines, respectively. The parameters for both methods are comparable and relatively small.
  }
  \label{fig:regression_benchmark}
\end{figure}

To give a general sense of the performance for the cross-attention architecture in a generic regression tasks, we evaluate both on a benchmark in Figure \ref{fig:regression_benchmark} where data is sampled from the function
\begin{equation}
f(x) = 
\left\{
\begin{array}{ll}
\sin(z_1 x) & \text{if } x < z_2 \\[0.5em]
1 & \text{if } z_2 \leq x < z_2 + \dfrac{1 - z_2}{2} \\[0.5em]
-1 & \text{if } x \geq z_2 + \dfrac{1 - z_2}{2}
\end{array}.
\right.
\end{equation}
Here, $z_1$ and $z_2$ serve as conditioning parameters to modulate the characteristic lengthscale of a smooth and piecewise constant part of a one-dimensional function, highlighting the ability to capture both smooth and discontinuous data. We compare our proposed architecture against a vanilla DeepONet \cite{lu2021learning} with a similar number of parameters in the infinite data limit; at each epoch of training, 1024 points are sampled from the unit interval conditioned upon a $Z$ sampled from the range $[A,B]$. This allows a rough characterization of the capacity of each architecture, where we observe a marked improvement from the transformer. While not a comprehensive comparison, this highlights that the transformer provides a highly accurate conditioning mechanism with a comparable model size to state-of-the-art architectures; the proposed architecture is substantially smaller than those used in contemporary large language models.

\section{Computational results}
\label{sec:results}

In the remainder, we provide a series of simple benchmarks to illustrate the performance of different aspects of the framework before constructing a digital twin.
\begin{itemize}
    \item Section \ref{exp:1dad} highlights a simple 1D advection diffusion case, showing how both physics and shape functions adapt to a boundary layer through the conditioning mechanism.
    \item Section \ref{exp:bellAD} extends this to an unstructured two-dimensional mesh, conditioning advection diffusion on the direction of flow, demonstrating the ability to generalize physics as partition geometries evolve under conditioning.
    \item Section \ref{exp:shocks} considers a Riemann problem from shock hydrodynamics while conditioning on material parameters, illustrating the ability to learn vector-valued nonlinear conservation laws with shocks.
    \item Section \ref{exp:conducting shell} considers electrostatics for a point charge in a conducting cylinder conditioning on the charge location, highlighting preservation of conservation structure to machine precision.
    \item Section \ref{exp:digitaltwin} constructs a digital twin of heat-transfer from a battery pack undergoing thermal runaway, achieving a real-time turbulence model able to capture the transition to turbulence.
\end{itemize}

For all problems, hyperparameters and other details necessary for reproduction of results may be found in \ref{app:nn_details}. We stress that the dimension of the learning dynamics for each problem scales with the number of partitions for a given problem and not with the underlying mesh; for a problem with four functions in $\mathcal{W}_0$, a small system of four non-linear equations needs to be solved. For all problems considered here, the conditioning process and solution of the consequent forward problem can be solved in under a second on a consumer-grade GPU. In each case, the number of partitions poses a hyperparameter which must be selected similar to the number of elements in a FEM simulation or the width/depth of a dense network; we present the smallest number of partitions able to obtain an accurate solution to achieve the fastest reduced model.

\subsection{1D Advection-Diffusion}\label{exp:1dad}

We consider the one-dimensional steady-state advection-diffusion equation
\begin{equation}
    \frac{d u}{dx} -{\varepsilon}\frac{d^2 u}{dx^2} = 0
\end{equation}
subject to Dirichlet boundary conditions \( u(0) = 1 \), \( u(1) = 0 \), where \( \varepsilon = 1/\mathrm{Pe} \) is the inverse Peclet number. The Peclet number \( \mathrm{Pe} \) characterizes the relative strength of advection to diffusion. In the advection-dominated regime (\( \mathrm{Pe} \to \infty \)), the problem becomes \emph{singularly perturbed}, and the solution exhibits a thin boundary layer of width \( \mathcal{O}(\varepsilon) \) near \( x = 1 \). The exact solution is
\begin{equation}\label{eqn:1dADexact}
    u(x) = 1 - \frac{1 - e^{x/\varepsilon}}{1 - e^{1/\varepsilon}},
\end{equation}
which clearly illustrates the sharp transition in the boundary layer.

Standard Galerkin finite element discretizations applied to this problem are known to produce spurious oscillations near the outflow boundary when the mesh does not adequately resolve the boundary layer. This is a classical manifestation of instability in the presence of dominant advection and motivates the use of stabilization techniques such as streamline upwind Petrov-Galerkin (SUPG) \cite{brooks1982streamline} or other stabilizations \cite{Houston2002,Schotzau2000,Guzman2006}. Without such techniques, achieving numerical stability typically requires mesh spacing \( h \ll \varepsilon \), which is prohibitively expensive for small \( \varepsilon \). By conditioning on $\varepsilon$, we can demonstrate our ability to adaptively construct compact bases tailored to resolve boundary layer physics without a prohibitive resolution requirement. Results for this problem are shown in Figure \ref{fig:1d_ad_training_and_validation_data}.
\begin{figure}[t]
  \centering

  \begin{subfigure}[t]{0.66\linewidth}
    \centering
    \begin{minipage}[t][5cm][t]{\linewidth}
      \includegraphics[width=\linewidth]{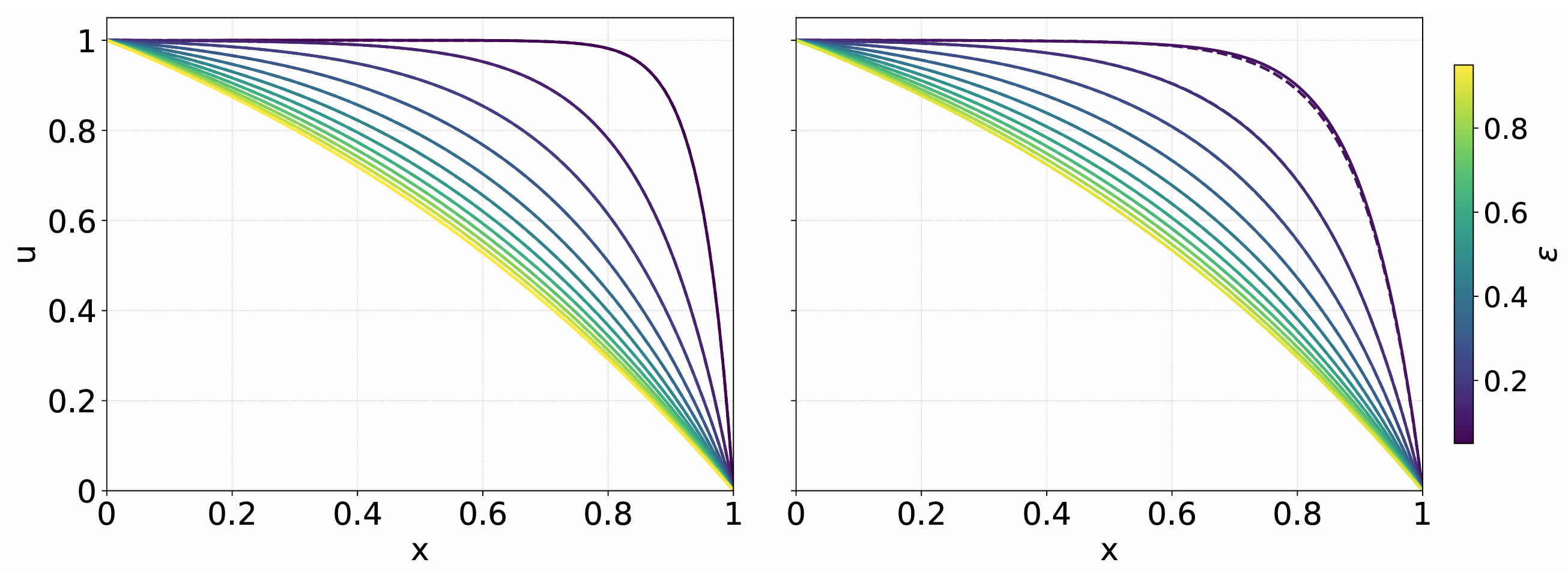}
      \caption{
        Training (left) and validation (right) solution examples for the 1D advection-diffusion problem at various $\varepsilon$. Dashed lines indicate analytic solutions, solid lines show modeled solutions.
      }
      \label{fig:1d_ad_training_and_validation_data}
    \end{minipage}
  \end{subfigure}
  \hfill
  \begin{subfigure}[t]{0.3\linewidth}
    \centering
    \begin{minipage}[t][5cm][t]{\linewidth}
      \includegraphics[width=\linewidth]{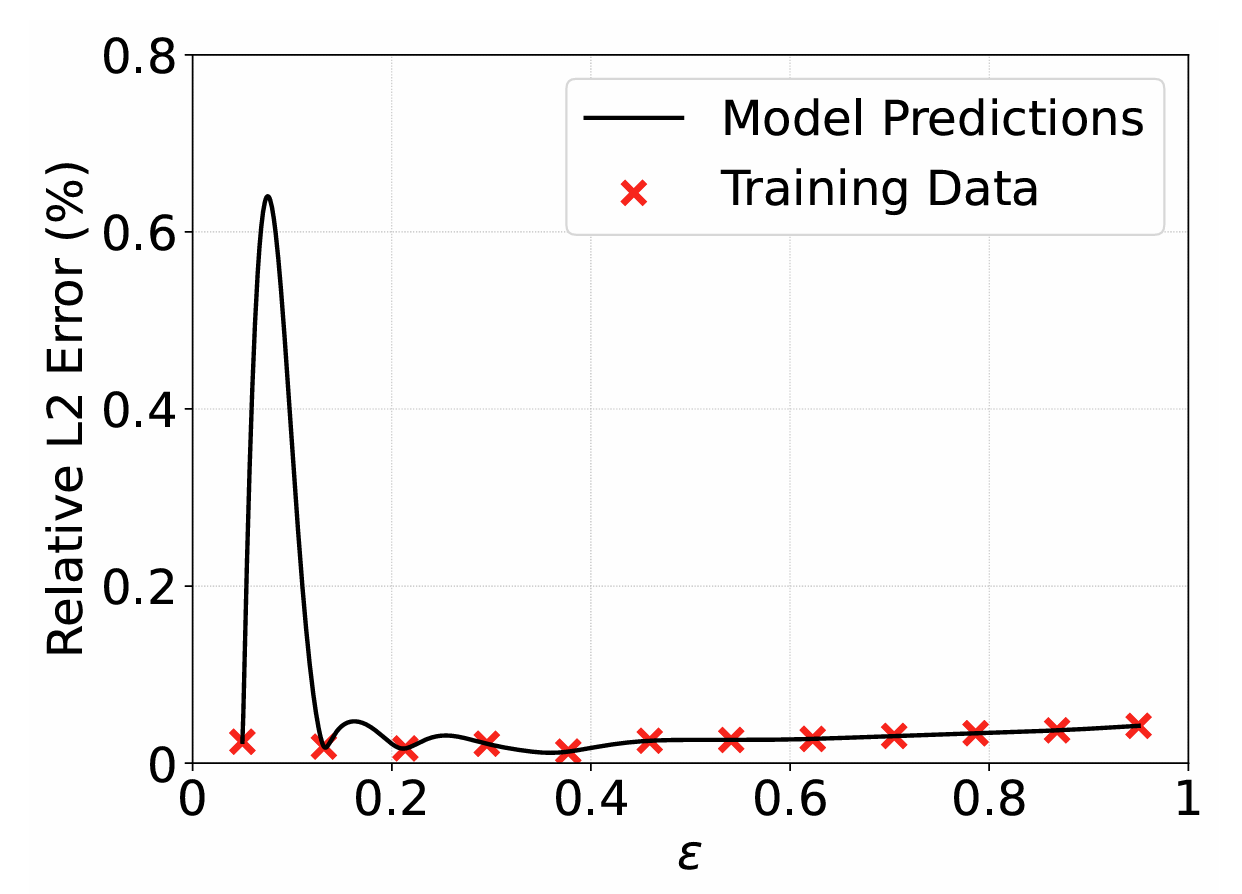}
      \caption{
        Relative $L^2$ error between modeled and analytic solutions as a function of $\varepsilon$, with red \textsc{x} marks indicating training points. 
      }
      \label{fig:1d_ad_test_rel_l2_error}
    \end{minipage}
  \end{subfigure}

  \vspace{1em} 

  \begin{subfigure}[t]{\linewidth}
    \centering
    \begin{minipage}[t][6cm][t]{\linewidth}
    \centering
      \includegraphics[width=0.9\linewidth]{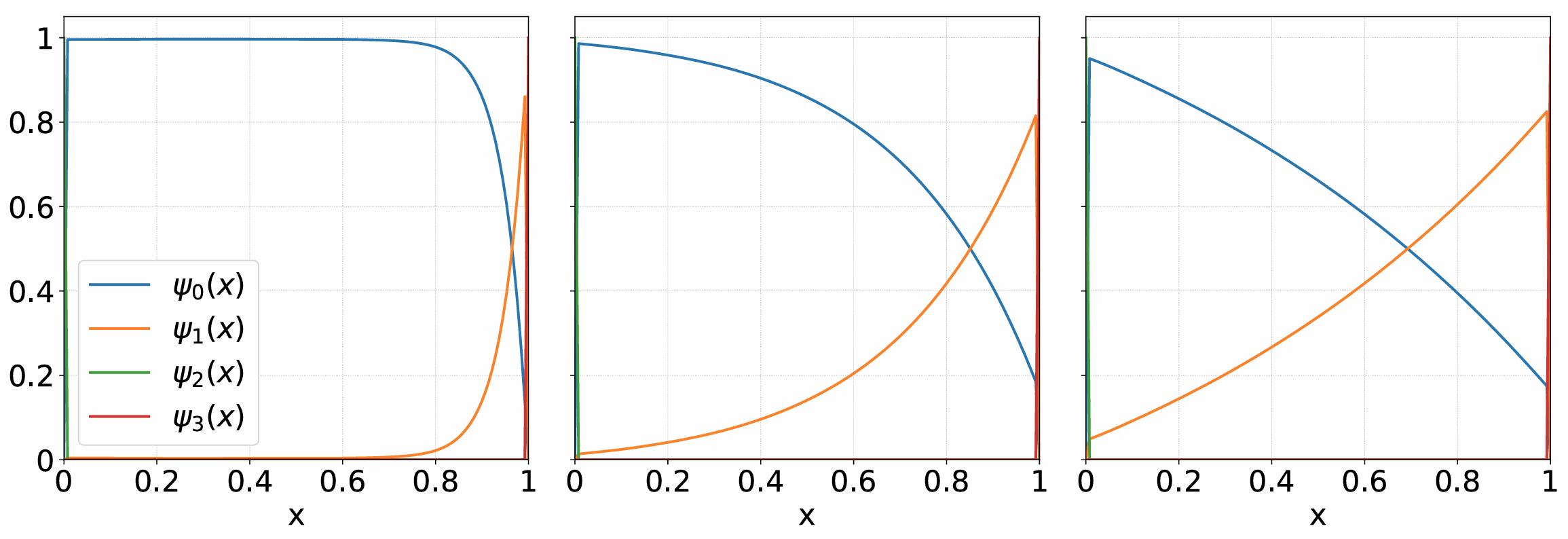}
      \caption{
        Coarse shape functions $\{\psi_i(x)\}$ evaluated on the fine mesh nodes for three values of $\varepsilon$, with reconstructed solution as a dotted line. By conditioning, we obtain a compact basis adapted to the boundary layer. 
      }
      \label{fig:1d_ad_pous_and_solns}
    \end{minipage}
  \end{subfigure}

  \caption{
    Performance of the learned shape function framework on the 1D advection-diffusion problem.
    (a) Example training and validation solutions;
    (b) Test relative $L^2$ error when varying $Z = \varepsilon$;
    (c) Coarse shape functions and reconstructions for representative $\varepsilon$.
  }
  \label{fig:1d_ad_combined}
\end{figure}

\subsection{Extensions to complex geometries and unstructured meshes}\label{exp:bellAD}

We extend the previous case to the steady multi-dimensional advection-diffusion equation:
\begin{equation}
\mathbf{\beta} \cdot \nabla u - \varepsilon \nabla^2 u = 0,
\label{eq:2d_ad}
\end{equation}
where \( \boldsymbol{\beta} \) is a prescribed advection field, and the local Peclet number is defined by \( \mathrm{Pe} = |\boldsymbol{\beta}| L / \varepsilon \) for a characteristic length scale \( L \). To examine performance on complex geometries, we consider a triangular mesh of Philadelphia’s historic Liberty Bell~\cite{Nash2010}. Dirichlet boundary data is imposed via the lifting strategy described in Section~\ref{liftingsection}, with \( u = -1 \) applied on the bell’s crack, \( u = 1 \) at the top handle, and \( u = 0 \) on the remaining boundary segments (see Figure~\ref{fig:bell_mesh}). We set \( \varepsilon = 0.01 \) and \( |\boldsymbol{\beta}| = 1 \), while varying the advection direction as \( \boldsymbol{\beta} = (\cos Z, \sin Z) \).

This parameterization allows us to condition on flow direction. The resulting shape functions (Figure~\ref{fig:bell_shape_functions}) adapt to align with the advective transport field after conditioning. For each direction \( Z \), we solve equation~\eqref{eq:2d_ad} using continuous Galerkin finite elements with \( P_1 \) Lagrange basis functions on the mesh in Figure~\ref{fig:bell_mesh}. The resulting relative L2 error ranges between 0.67\% and 1.49\% (Figure \ref{fig:bell_error}).

\begin{figure}[h]
  \centering

  \begin{subfigure}[t]{0.30\linewidth}
    \centering
    \includegraphics[width=\linewidth]{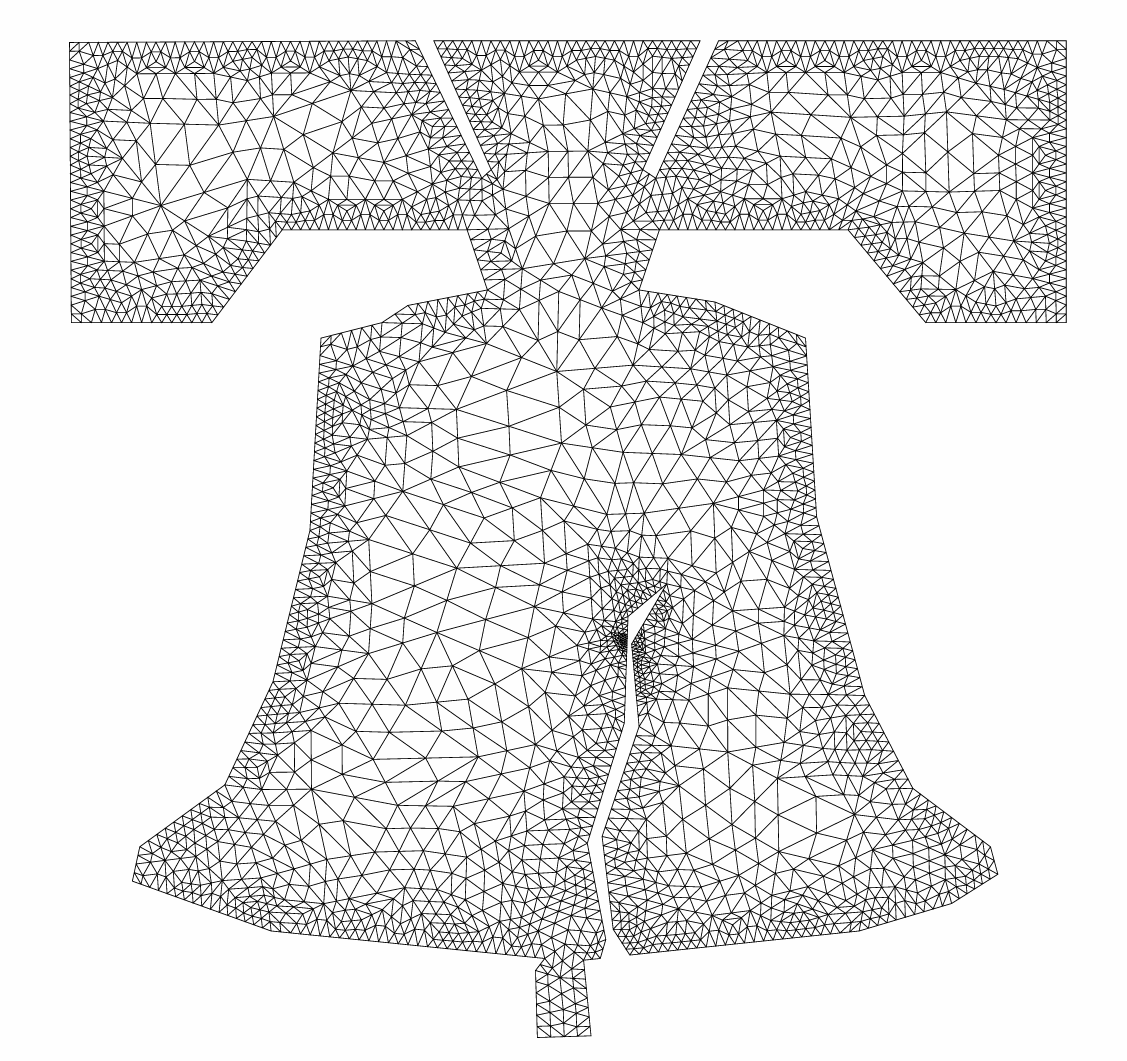}
    \caption{
      Triangular mesh of the Liberty Bell. Nonhomogeneous Dirichlet data is applied to the crack and handle of the bell to highlight the lifting procedure.
    }
    \label{fig:bell_mesh}
  \end{subfigure}
  \hfill
  \begin{subfigure}[t]{0.65\linewidth}
    \centering
    \includegraphics[width=\linewidth]{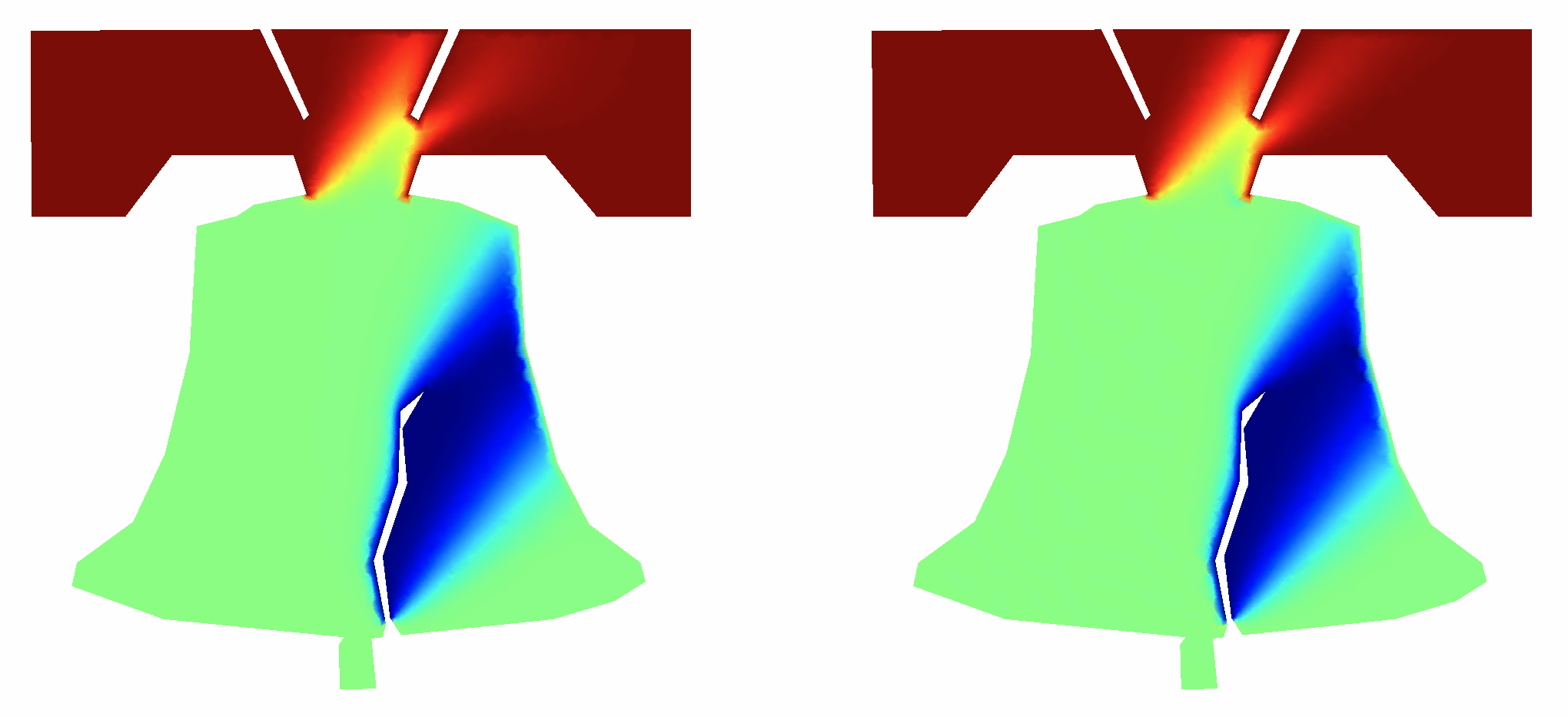}
    \caption{
      Modeled solution (left) compared to the target solution (right) computed using standard FEM for $\Theta = 41^\circ$. This validation example lies between two training examples at $Z = 35^\circ$ and $46^\circ$.
    }
    \label{fig:bell_solns}
  \end{subfigure}

  \vspace{1em} 

  \begin{subfigure}[t]{0.55\linewidth}
    \centering
    \includegraphics[width=\linewidth]{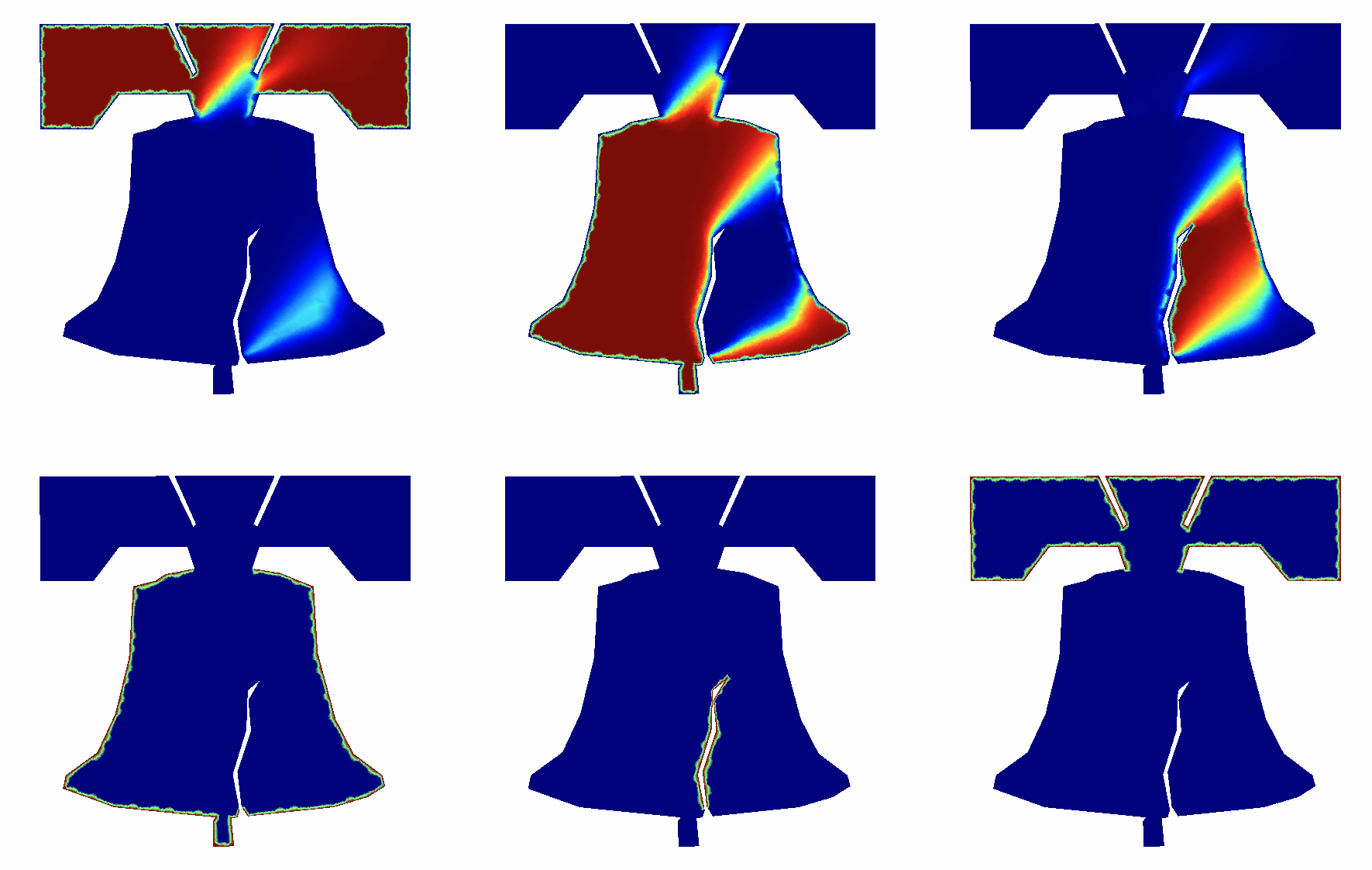}
    \caption{
      Whitney zero-forms for $Z = 41^\circ$. The bottom row shows fixed boundary shape functions used in the Dirichlet lift; the top row shows learned internal partitions.
    }
    \label{fig:bell_shape_functions}
  \end{subfigure}
  \hfill
  \begin{subfigure}[t]{0.4\linewidth}
    \centering
    \includegraphics[width=\linewidth]{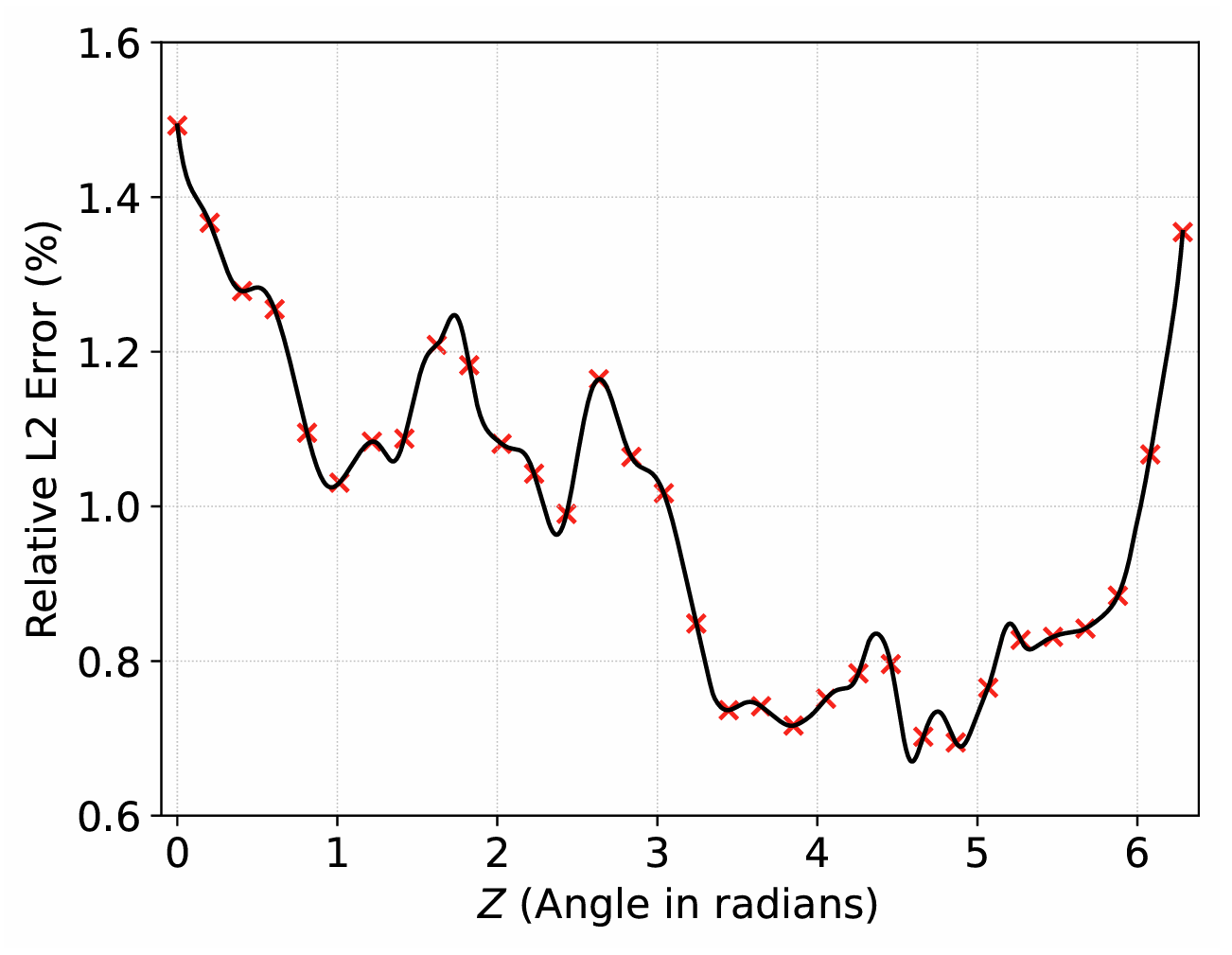}
    \caption{
      Test relative $L^2$ error for $Z\in[0,2 \pi]$. The training set consists of 32 equispaced values for $Z$, denoted by red \textsc{x} marks.
    }
    \label{fig:bell_error}
  \end{subfigure}

  \caption{
    Learned shape function framework on the Liberty Bell geometry.
    (a) Triangular mesh of the Liberty Bell;
    (b) Modeled and target solutions for an intermediate validation angle;
    (c) Learned interior shape functions;
    (d) Test relative $L^2$ error when varying flow direction $Z$;
  }
  \label{fig:bell_combined}
\end{figure}



\subsection{Spacetime shock physics}\label{exp:shocks}

While the previous examples have considered steady-state solutions to scalar-valued, linear conservation laws, we next consider Riemann problems associated with vector-valued, unsteady conservation laws. Recall Euler's equations,

\begin{equation}
    \frac{\partial}{\partial t}
    \begin{bmatrix}
    \rho \\
    \rho v \\
    E
    \end{bmatrix}
    + 
    \frac{\partial}{\partial x}
    \begin{bmatrix}
    \rho v \\
    \rho v^2 + p \\
    v(E + p)
    \end{bmatrix}
    = 0
    \label{eq:euler_equations}
\end{equation}
with associated conserved density fields: mass density $\rho$, momentum density $M = \rho v$, for a velocity $v$, and energy density $E$. For simplicity, we consider the ideal gas equation of state written in terms of conservative variables
\begin{equation}
    p = (\gamma - 1) \left(E - \frac{M^2}{2\rho}\right).
\end{equation}
This closure is parameterized by the adiabatic index $\gamma$, which we will consider as our conditioning variable to demonstrate in a simple setting how one could condition over material parameters or other parametric dependence on constitutive relationships. For the domain $[-L/2, L/2] \times [0, 1]$ for $L = 7$ in $(x, t)$ coordinates, we consider the Sod shock tube initial conditions \citep{SOD19781}

\begin{align}
\rho(x) &= 
\begin{cases}
3, & \text{if } x < 0 \\
1, & \text{if } x \geq 0
\end{cases} \notag\\
M(x) &= 0, \notag\\
E(x) &= 
\begin{cases}
\frac{3}{\gamma - 1}, & \text{if } x < 0 \\
\frac{1}{\gamma - 1}, & \text{if } x \geq 0
\end{cases}
\label{eq:sod_ics}
\end{align}

As an underlying fine mesh we consider a $128 \times 64$ Cartesian grid of continuous P1 triangular elements, and generate as data the solutions for 21 equispaced $\gamma \in [2,7]$.

To treat the unsteady physics in this problem, the machinery introduced in this paper could in principle be embedded in a neural time integrator like NeuralODE \cite{chen2018neural}. Instead we pose our problem in spacetime following our previous work \cite{patel2022thermodynamically}. Define the \textit{spacetime differential} by concatenating the temporal partial derivative and the spatial gradient, $\widetilde{\nabla} = [\partial_t; \nabla]$. With this modification, a given unsteady conservation law
\begin{equation}
    \partial_t u + \nabla \cdot F(u) = 0
\end{equation}
may be rewritten as
\begin{equation}
    \widetilde\nabla \cdot \widetilde{F(u)} = 0,
\end{equation}
where we define the augmented spacetime flux $\widetilde{F(u)} = [u; F(u)]$, so that we obtain a steady-state problem. To perform the lift, we partition the spacetime boundary \( \partial \Omega \) as \( \partial\Omega= \Gamma_1 \cup \Gamma_2 \cup \Gamma_3 \), where:
\begin{itemize}
\item \( \Gamma_1 = \{(x,t) : x < 0,\ t= 0\ \cup x=-L/2,\ t \neq 1\} \) imposes the left initial condition and wall boundary;
\item \( \Gamma_2 = \{(x,t) : x \geq 0,\ t = 0\ \cup x=L/2,\ t \neq 1\} \) imposes the right initial condition and wall boundary;
\item \( \Gamma_3 = \{(x,t) : t = 1\} \) prescribes a homogeneous Neumann outflow condition.
\end{itemize}
These boundary partitions define the Dirichlet lift along the initial condition and wall boundaries (bottom, left, and right) and a homogeneous Neumann condition at the "outflow" (top). Results in Figure~\ref{fig:riemann_dir_combined} demonstrate the performance of the method, including representative solutions (Figure~\ref{fig:riemann_dir_training_solns}), the learned partition of unity functions (Figure~\ref{fig:riemann_dir_pou_plots}), and the relative \( L^2 \) error across validation cases (Figure~\ref{fig:riemann_dir_rel_l2_error}). Notably, the learned shape functions exhibit sharp localization near normal shocks and smooth transitions across rarefactions, yielding non-oscillatory reconstructions even in the absence of explicit shock limiters. Whereas classical high-resolution finite element or finite volume methods typically require slope or flux limiters to suppress Gibbs phenomena near discontinuities~\cite{sweby1984high,cockburn1989tvb,kuzmin2012flux}, our model implicitly learns a flux representation that stabilizes the solution and suppresses spurious oscillations in a data-driven manner. This is particularly significant given that the partitions are constructed from \emph{continuous} shape functions, a setting where advanced techniques such as flux-corrected transport (FCT) are often necessary to enforce boundedness and monotonicity~\cite{kuzmin2012flux}. The resulting solutions are efficient enough that the learned model could be e.g. embedded in a hydrodynamics code as a data-driven flux reconstruction.

\begin{figure}[t]
  \centering

  \begin{subfigure}[t]{0.9\linewidth}
    \centering
    \includegraphics[width=\linewidth]{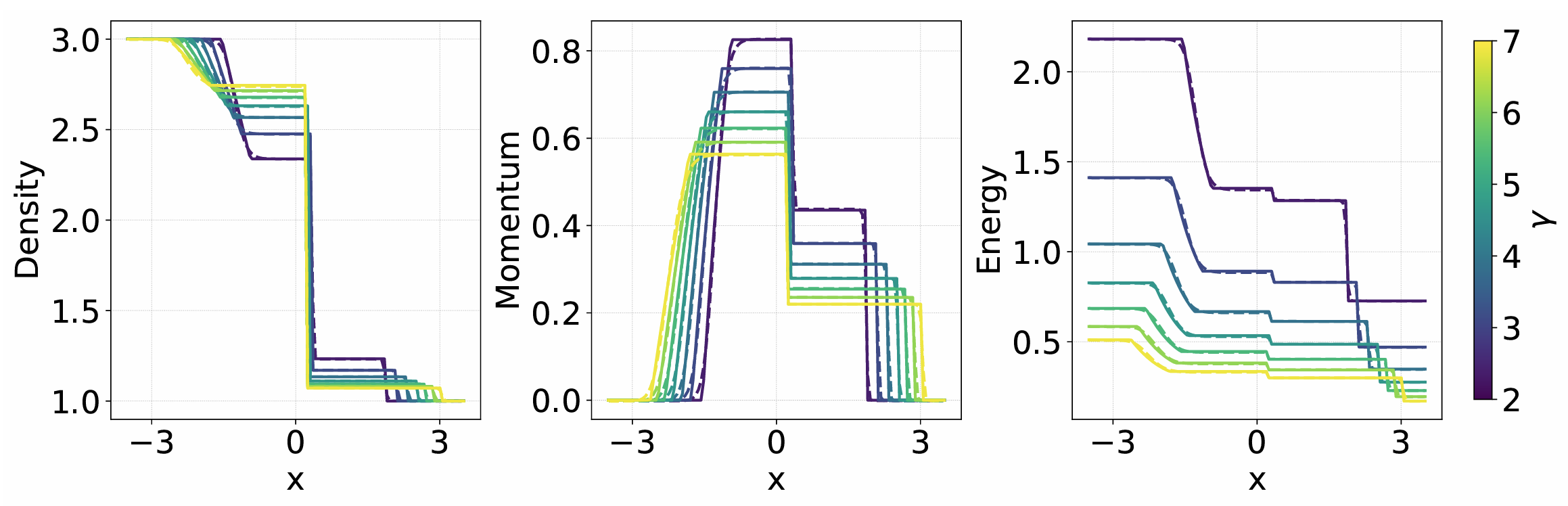}

    \caption{
      True (dashed) and modeled (solid) solutions for the Sod shock tube at $t = 1$, for representative validation values of $\gamma$. Solutions are localized and non-oscillatory in the vicinity of shocks.
    }
    \label{fig:riemann_dir_training_solns}
  \end{subfigure}

  \begin{subfigure}[t]{0.9\linewidth}
    \centering
    \includegraphics[width=\linewidth]{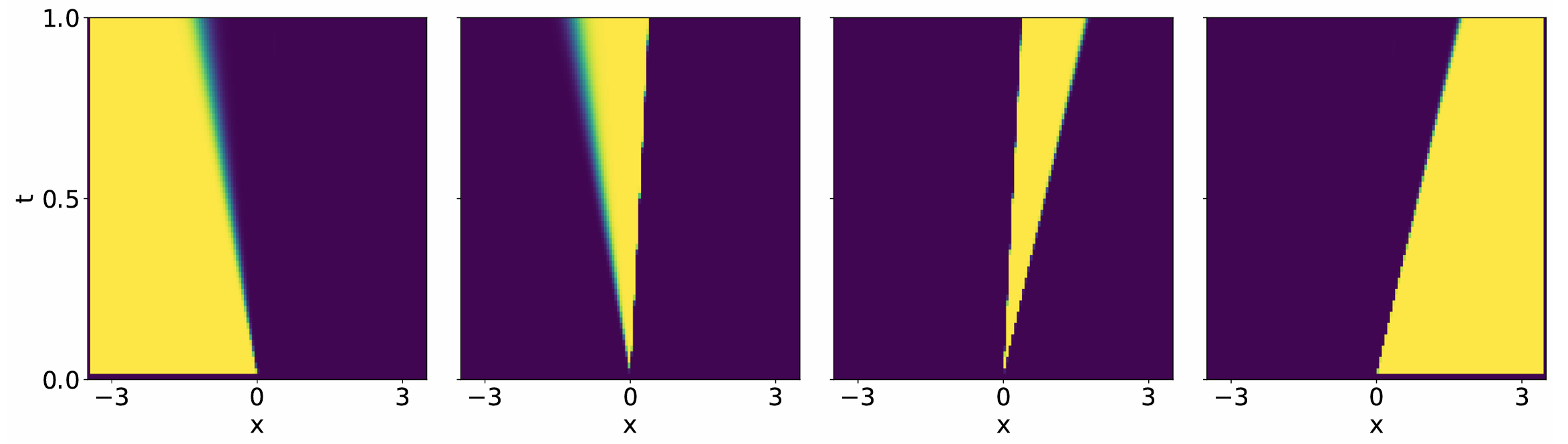}
    \caption{
      Learned internal spacetime partitions comprising the coarse shape function basis, shown for $\gamma = 2$. Remarkably, the learned functions concentrate near normal shocks, sharply capturing discontinuities, while exhibiting smooth, diffuse transitions across rarefaction waves.
    }
    \label{fig:riemann_dir_pou_plots}
  \end{subfigure}

  \begin{subfigure}[t]{0.4\linewidth}
    \centering
    \includegraphics[width=\linewidth]{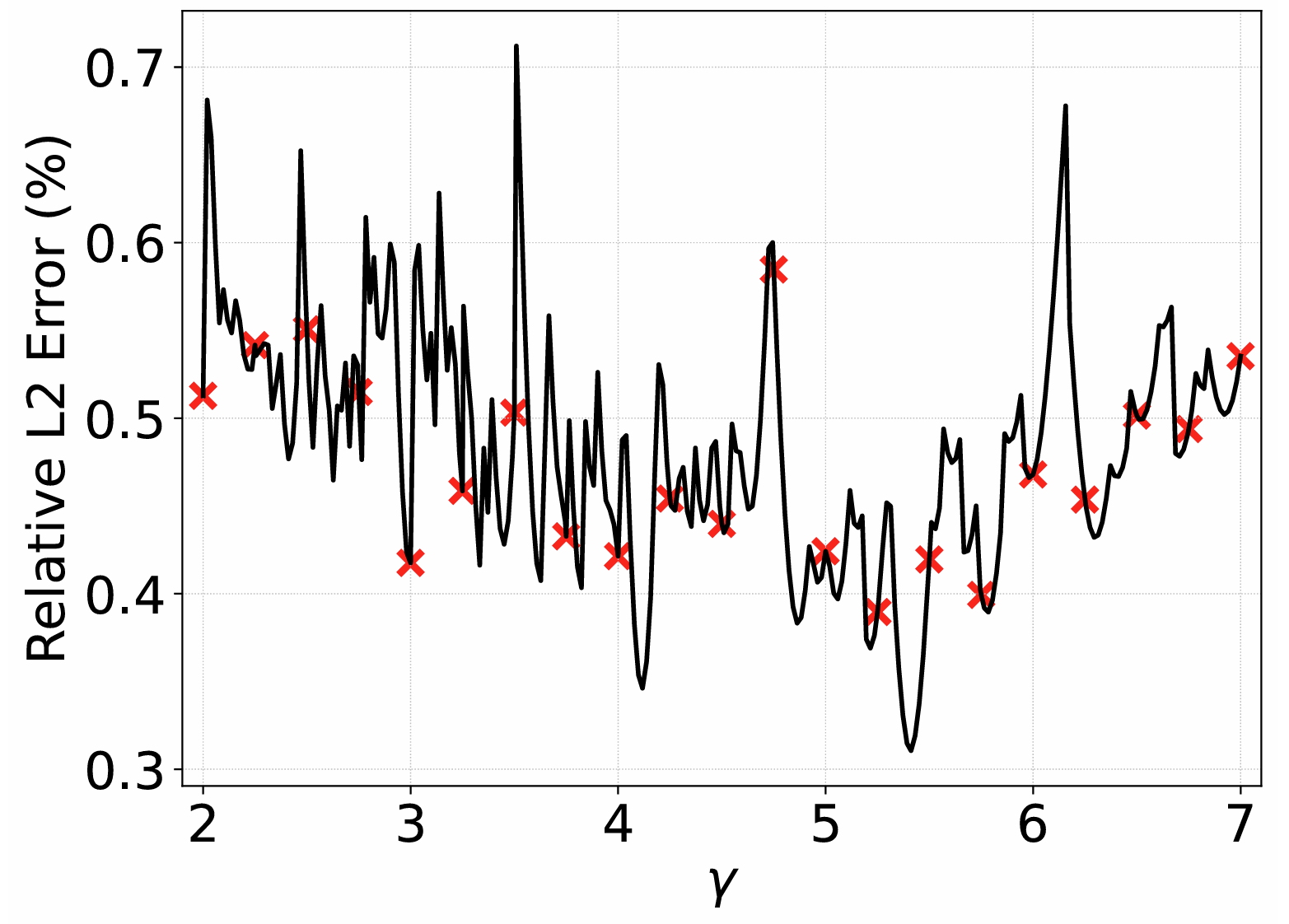}
    \caption{
      Relative $L^2$ error in each scalar field across the training $\gamma$ range. Red \textsc{x} marks indicate training points.
    }
    \label{fig:riemann_dir_rel_l2_error}
  \end{subfigure}
  \hfill
  \begin{subfigure}[t]{0.5\linewidth}
    \centering
    \includegraphics[width=\linewidth]{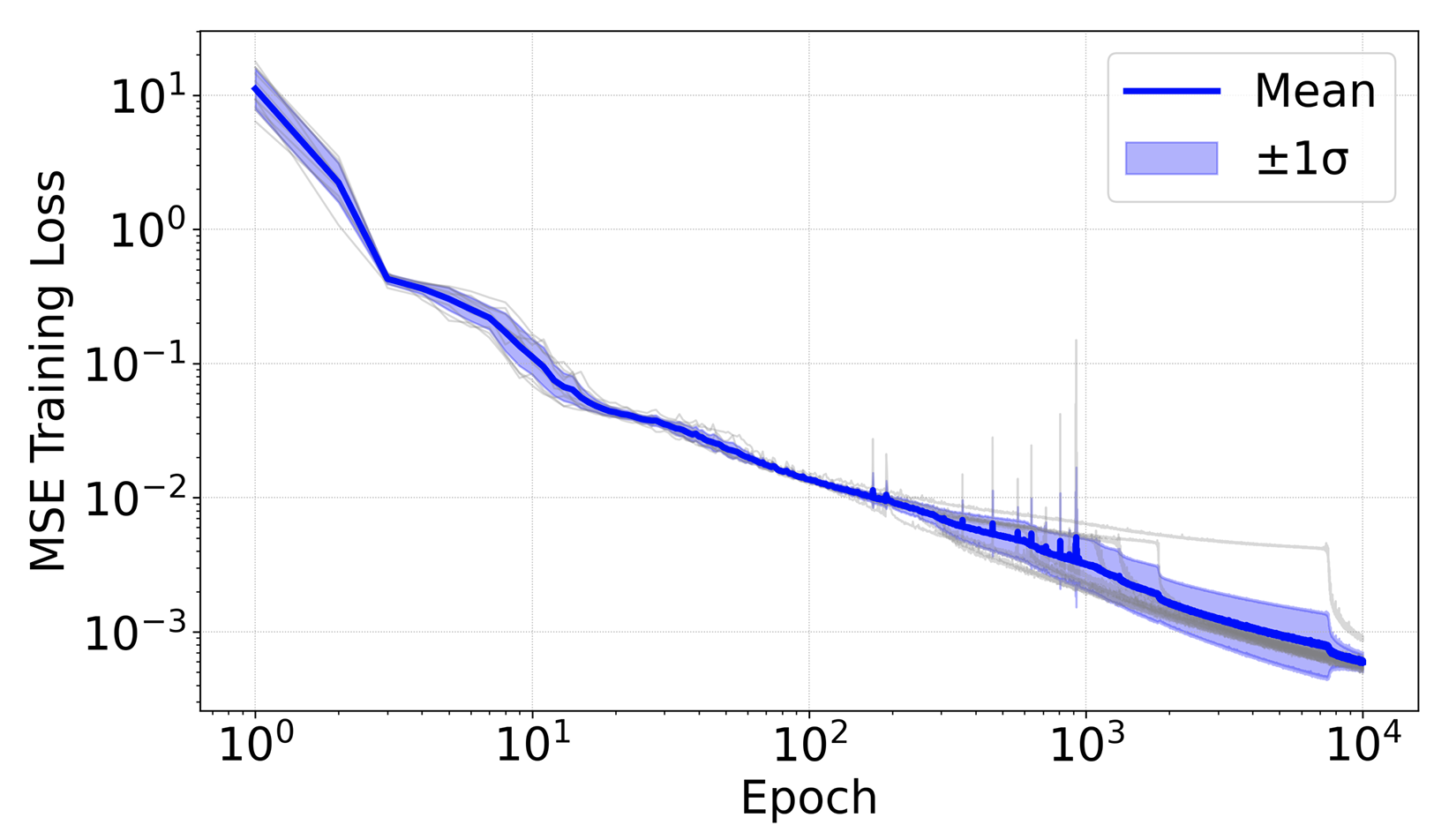}
    \caption{
        Convergence of training loss over an ensemble of ten initializations, highlighting stability of training.
    }
    \label{fig:loss_history_confidence}
  \end{subfigure}

  \caption{
    Performance of the learned model for the Sod shock tube:
    (a) Representative validation solutions; (b) Learned spacetime partitions; (c) Relative $L^2$ error across $\gamma$; (d) Loss history confidence.
  }
  \label{fig:riemann_dir_combined}
\end{figure}

\clearpage

\subsection{Charge in a Conducting Shell}\label{exp:conducting shell}

In the final pedagogical example, we consider the electrostatics of a point charge enclosed in a conducting shell on the unit disk. The electric potential \( u \) satisfies the Poisson equation
\begin{equation}
\nabla^2 u = - \delta(\mathbf{x} - \mathbf{x}_Q),
\label{eq:poisson}
\end{equation}
where \( \delta \) denotes the Dirac delta distribution and \( \mathbf{x}_Q \) is the location of the point charge. We impose a homogeneous Dirichlet boundary condition on the shell, enforced via lift. This problem is notable for the reduced regularity of the solution: due to the singular source term, \( u \notin H_0^1(\Omega) \). Applying the divergence theorem yields the conservation law
\begin{equation}
    \int_{\partial B(0,1)} \nabla u \cdot d\mathbf{A} = -1.
\end{equation}
This example thus serves to verify both global conservation and the learned model’s ability to localize and represent singular behavior. We consider an infinite data setting: at each epoch of training we condition on the charge location by taking $\mathbf{x}_Q = Z$ for $Z$ sampled from a uniform distribution on the disk and solve the corresponding finite element problem with a continuous Galerkin nodal-$P1$ code on a 1550 node triangular mesh, which also serves as the fine mesh for our learning problem.

The results of training can be found in Figure \ref{fig:conducting_sphere_combined}. Due to the singularity, the learned solution can be poor in the vicinity of the charge. Notably, we demonstrate in Figure \ref{fig:conducting_sphere_errors} that the electric flux is predicted to machine precision independent of the quality of reconstruction as a consequence of global conservation. Interestingly, we also observe that the process localizes a partition to the singularity, using the remaining partitions to reconstruct the far field (Figure \ref{fig:conducting_sphere_pous}).

\begin{figure}[t]
  \centering

  \begin{subfigure}[t]{0.5\linewidth}
    \centering
    \begin{minipage}[t][6cm][t]{\linewidth}
      \includegraphics[width=\linewidth]{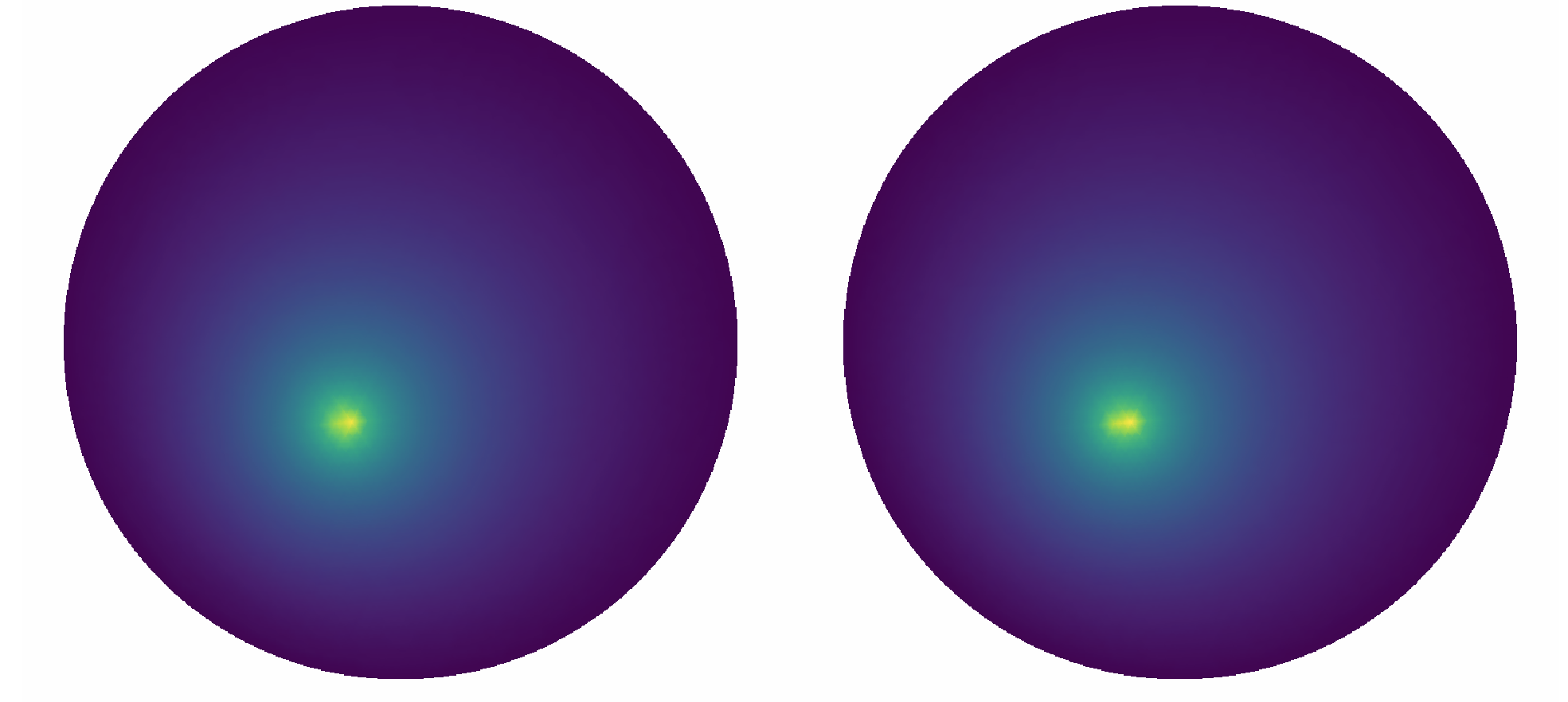}
      \caption{
        Representative solution (left) and target FEM solution (right) for a point charge in a unit disk.
      }
      \label{fig:conducting_sphere_soln_comparison}
    \end{minipage}
  \end{subfigure}
  \hfill
  \begin{subfigure}[t]{0.45\linewidth}
    \centering
    \begin{minipage}[t][6cm][t]{\linewidth}
      \includegraphics[width=\linewidth]{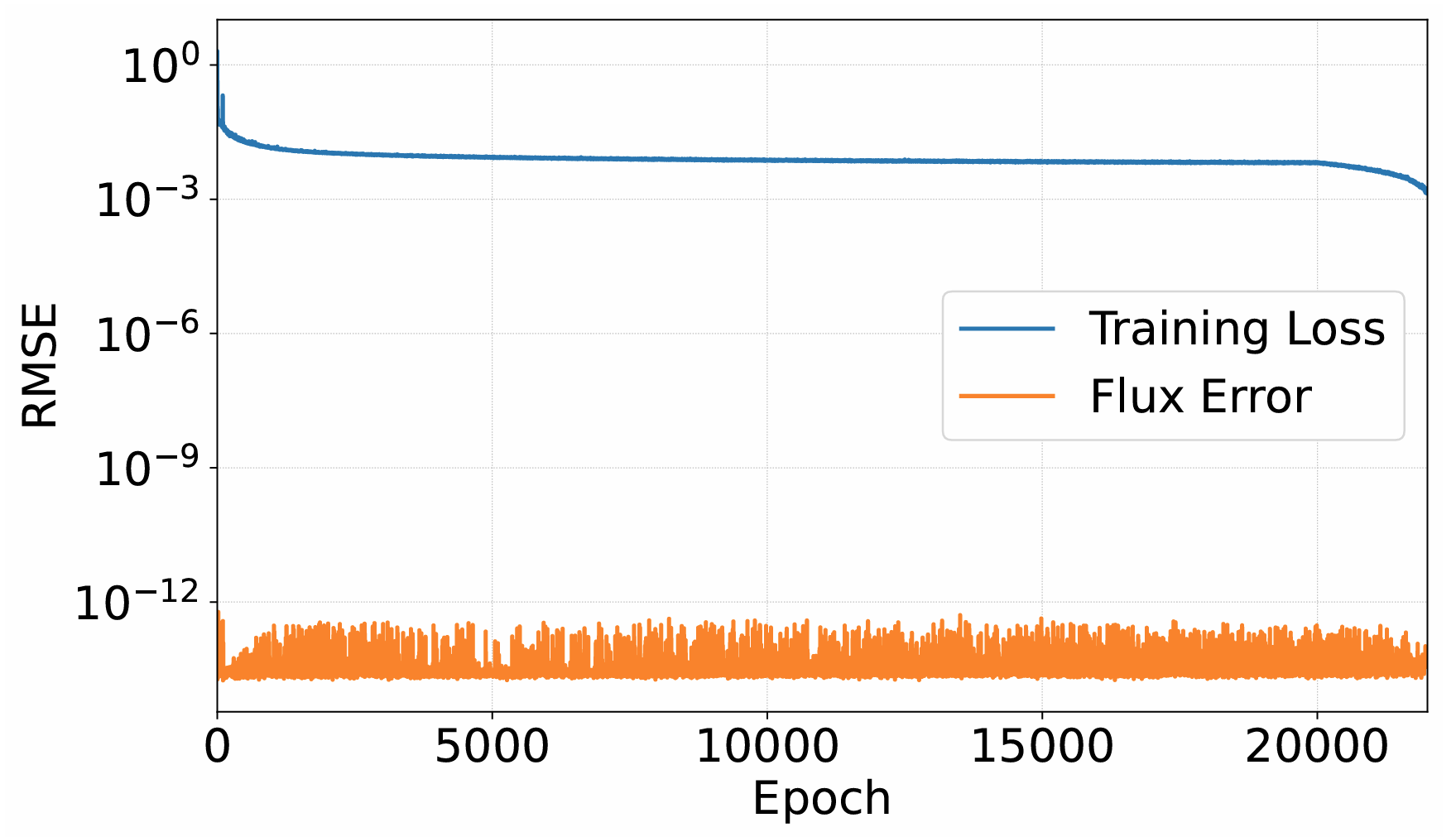}
      \caption{
        RMSE error and flux conservation error. The flux error measures the difference between point source and electric flux out of domain, demonstrating conservation to machine precision.
      }
      \label{fig:conducting_sphere_errors}
    \end{minipage}
  \end{subfigure}

  \vspace{1em}

  \begin{subfigure}[t]{\linewidth}
    \centering
    \includegraphics[width=0.9\linewidth]{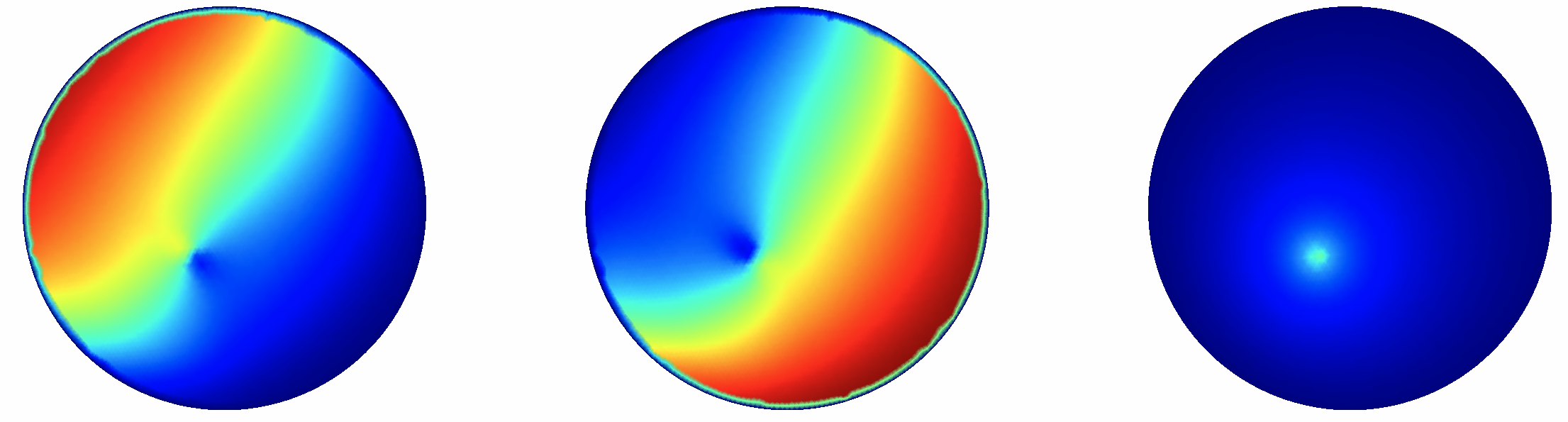}
    \caption{
      The three representative shape functions conditioned on a charge in the lower left quadrant. One shape function is localized to the singularity while the remainder reconstruct the far field. This adaptation to the singularity tracks the point charge as the location $Z$ is varied.
    }
    \label{fig:conducting_sphere_pous}
  \end{subfigure}

  \caption{
    Performance of the learned shape function framework for the charge-in-conducting-shell problem:
    (a) Solution comparison; (b) Batchwise RMS and conservation errors; (c) Learned shape functions.
  }
  \label{fig:conducting_sphere_combined}
\end{figure}


Figure \ref{fig:conducting_sphere_soln_comparison} presents a comparison between the modeled solution and the target FEM solution for a unit charge located in the lower left quadrant of the unit circle. The median relative L2 error over 640 random samplings of the charge location is 2.05\%. In Figure \ref{fig:conducting_sphere_pous}, we show the three learned coarse shape functions for the same example shown in Figure \ref{fig:conducting_sphere_soln_comparison}. Note that the model learns to use one of the shape functions to ``punch out'' a small region centered on the charge's location.

\subsection{Digital twin of a battery rack undergoing thermal runaway}\label{exp:digitaltwin}

Finally, we construct a digital twin for thermal management in a rack of lithium-ion batteries. Thermal runaway, initiated by mesoscopic failure modes, triggers cascading exothermic reactions that cause a self-amplifying temperature rise, often culminating in catastrophic failure such as fire or structural damage~\cite{Wang2012,Finegan2017}. The timescale of this process is strongly influenced by convective heat transfer, which can be modulated through active flow control.

Fully resolving the coupled electromechanical and thermal physics would require microstructure-resolving models for battery internals and high-fidelity DNS or LES for external flow—both computationally intensive. As a result, most studies rely on restrictive scale separation assumptions that decouple battery and flow physics, potentially missing critical interactions \cite{Zhang2021}. A digital twin capable of capturing this coupled multiscale system would enable predictive modeling of material–HVAC interactions and support control strategies that mitigate thermal runaway by activating cooling at the earliest signs of instability. In earlier work, we constructed a digital twin of the transport in the battery using as-built tomography data \cite{actor2024data}. Here, we construct a twin of the surrounding environment, using conditioning as a simple mechanism to couple the fluid response to the surface temperature of a battery module undergoing thermal runaway colocated with other nominally performing modules.


We use a battery rack with a simplified, generic geometry composed of six vertically stacked battery modules, represented as rectangular prisms \cite{Meehan2025}. The modules are enclosed in the rack with openings at the bottom left and top right with an atmospheric boundary condition. To mimic thermal runaway, a Dirichlet condition is imposed on the temperature of the second module from the bottom and Low-Mach number LES model is solved for density, momentum, and energy to determine the resulting heat transfer through the rack. For natural convection the transition to turbulence is characterized by the Grashoff number, and so we generate a database of solutions using the Sierra/Fuego solver ~\cite{FUEGO} developed at Sandia National Laboratories, varying (1) viscosity ($\mu$) and (2) the temperature difference ($\Delta T$) of the runaway module. We condition on the working fluid and the battery temperature by taking $Z = [\mu, \Delta T]$ with the objective of predicting the transition to turbulence across a range of Grashoff numbers.


In natural convection, the transition to turbulence is governed by the Grashof number,
\[
\mathrm{Gr} = \frac{g\,\beta\,\Delta T\,L^3}{\nu^2},
\]
where \(g\) is gravity, \(\beta\) is the thermal expansion coefficient, \(\Delta T\) is the driving temperature difference, \(L\) is the characteristic length, and \(\nu\) is the kinematic viscosity~\cite{Incropera2007}. We generate a dataset by varying \(\mu\) and \(\Delta T\) and condition on \(Z = [\mu, \Delta T]\). For canonical vertical-wall configurations, the transition to turbulence occurs near \(\mathrm{Gr} \approx 10^9\)~\cite{Incropera2007}, and we expect to extract a model which predicts a transition in a similar range of $\mathrm{Gr}$. 

The dataset consists of 25 simulations consisting of a $5\times5$ Cartesian grid of $\mu \in \left[1.8 \cdot 10^{-4}-1.0 \cdot 10 ^{-3}\right]$ g/cm-s and $\Delta T\in \left[2.4- 75\right]$ K. Each simulation averaged around 86,400 CPU-hours to complete using a multi-CPU cluster, and so the relative sparsity of a $25$ sample dataset in this simple configuration still represents a major data curation effort, highlighting the need for a sample-efficient approach. Each simulation consists of a time-series prescribing an LES solution of density, velocity, temperature, enthalpy, pressure, momentum, and the turbulent kinetic energy (TKE).

We build a real-time simulator by developing a data-driven Reynolds‑Averaged Navier–Stokes (RANS) solver. In RANS, an instantaneous flow variable \(f(\mathbf{x},t)\) is decomposed into a mean and a fluctuating component,
\[
f(\mathbf{x},t) = \overline{f(\mathbf{x},t)} + f'(\mathbf{x},t),
\]
with the ensemble or time average satisfying \(\overline{f'} = 0\)~\cite{durbin2011statistical}. This leads to the RANS equations, in which the mean flow is governed by additional Reynolds-stress terms that must be closed via turbulence models. Although RANS is widely used in industrial CFD, the closures generally provide \textit{qualitative insight}, are empirically tuned and often lack quantitative accuracy without calibration to experimental data~\cite{Duraisamy2018} and generally struggle to predict the transition to turbulence \cite{LIVESCU2008,Schwarzkopf2016}. In contrast to classical approaches that postulate closure equations for turbulent quantities in an ad hoc manner, we posit data-driven conservation laws for the Reynolds-averaged density, momentum, pressure, enthalpy, and turbulent kinetic energy fields. From the high-fidelity LES data, we construct steady-state RANS profiles and reverse engineer a conditioned neural weak form (CNWF) model whose solution yields a self-consistent RANS prediction. Further details may be found in \ref{app:nn_details}.

Representative results are shown in Figure~\ref{fig:battery_val_and_pou}. We observe strong agreement with training data (Figure~\ref{fig:battery_val_comparison}), with partitions that clearly capture wake and plume features in various fields. Quantitative assessment (Figure~\ref{fig:TKE_combined}) shows relative errors below 10\% for all hydrodynamic variables, along with accurate turbulence transition prediction. Such accuracy is notable for RANS models, as traditional closures typically achieve only within a factor of two agreement in many flow regimes~\cite{NASA2020}. Our model performs inference in under one second, providing a speedup of approximately \(3.11\times10^8\) compared to the 86,400 CPU-hours required to generate the training data.

\begin{figure}[t]
  \centering

  \begin{subfigure}[t]{\linewidth}
    \centering
    \includegraphics[width=0.95\linewidth]{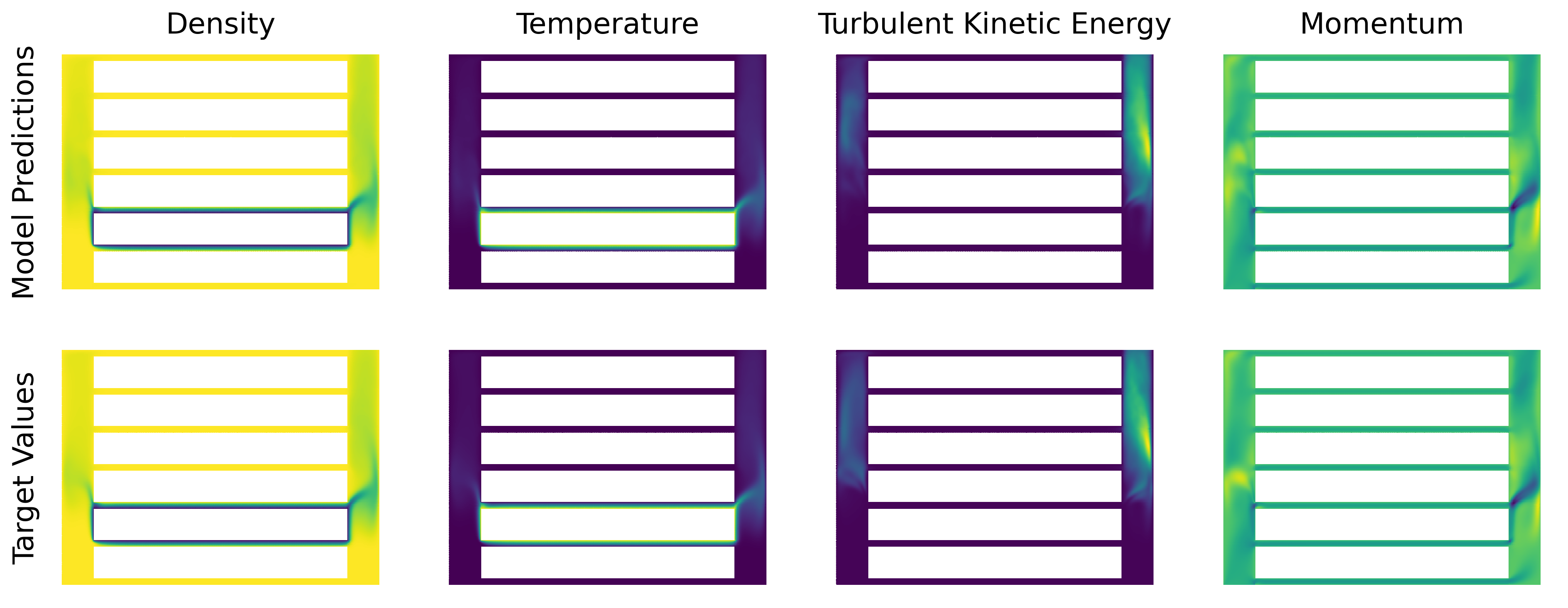}
    \caption{The model predictions (left) and the targets (right), for all scalar fields, for the held-out validation example with $\Delta T = 31.6$~K, $\mu = 6.7 \times 10^4$~g/cm-s. Fields are scaled from minimum (blue) to maximum (yellow) values.}
    \label{fig:battery_val_comparison}
  \end{subfigure}

  \vspace{1em}

  \begin{subfigure}[t]{\linewidth}
    \centering
    \includegraphics[width=0.8\linewidth]{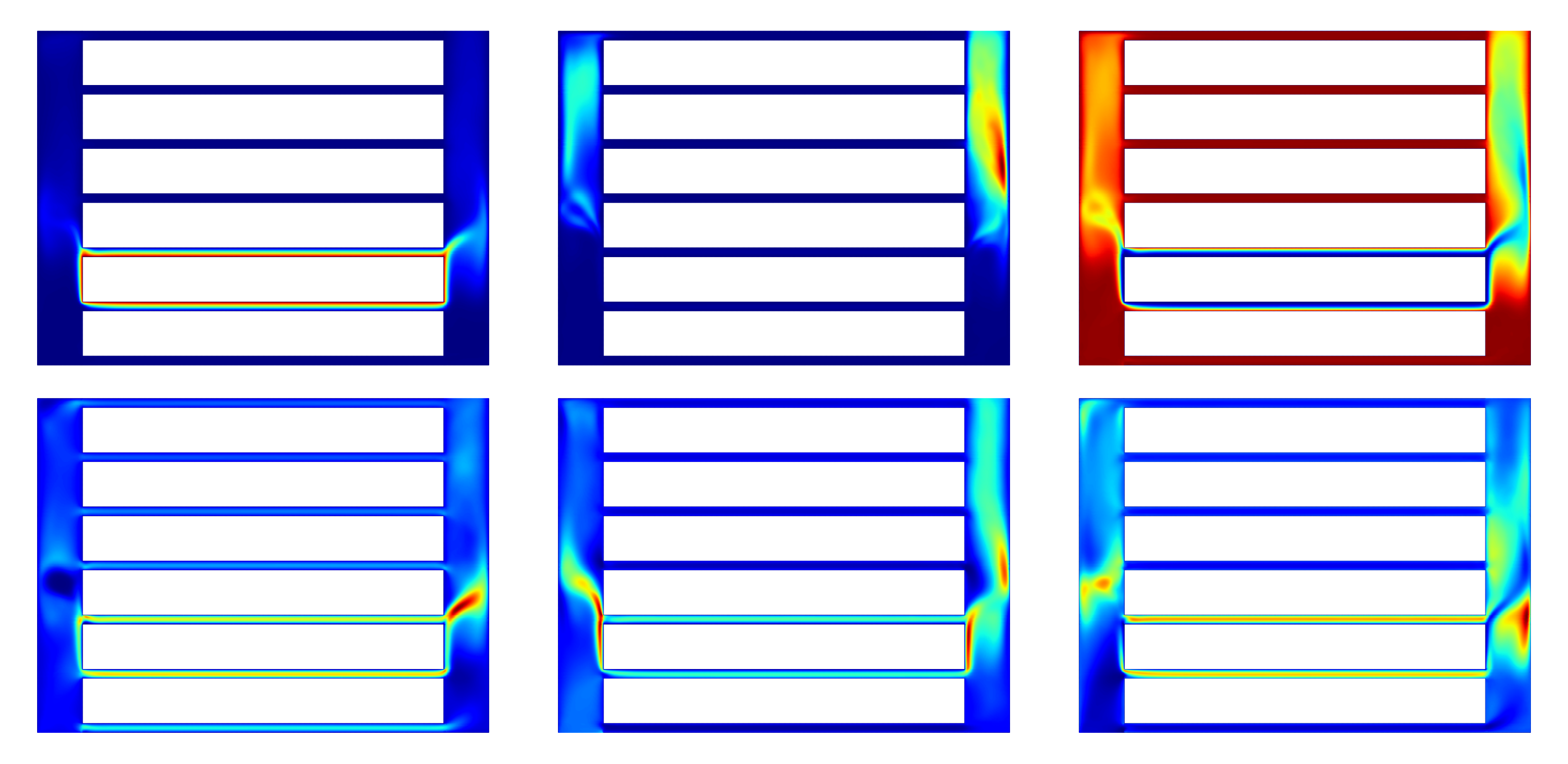}
    \caption{The six learned shape functions for the training example shown in Figure~\ref{fig:battery_val_comparison}. All but one shape function are used to capture different aspects of the plume from the hot battery module (the second from the bottom of the rack). Shape functions enforcing Dirichlet boundary conditions on battery walls and outer casing are not shown.}
    \label{fig:battery_pou_shapes}
  \end{subfigure}

  \caption{Performance of the learned model for the battery thermal runaway problem. (a) Validation solutions. (b) Learned shape functions conditioned on $\Delta T$ and $\mu$.}
  \label{fig:battery_val_and_pou}
\end{figure}

\begin{figure}[t]
  \centering

  \begin{subfigure}[t]{0.5\linewidth}
    \centering
    \begin{minipage}[t][8.5cm][t]{\linewidth}
      \includegraphics[width=\linewidth]{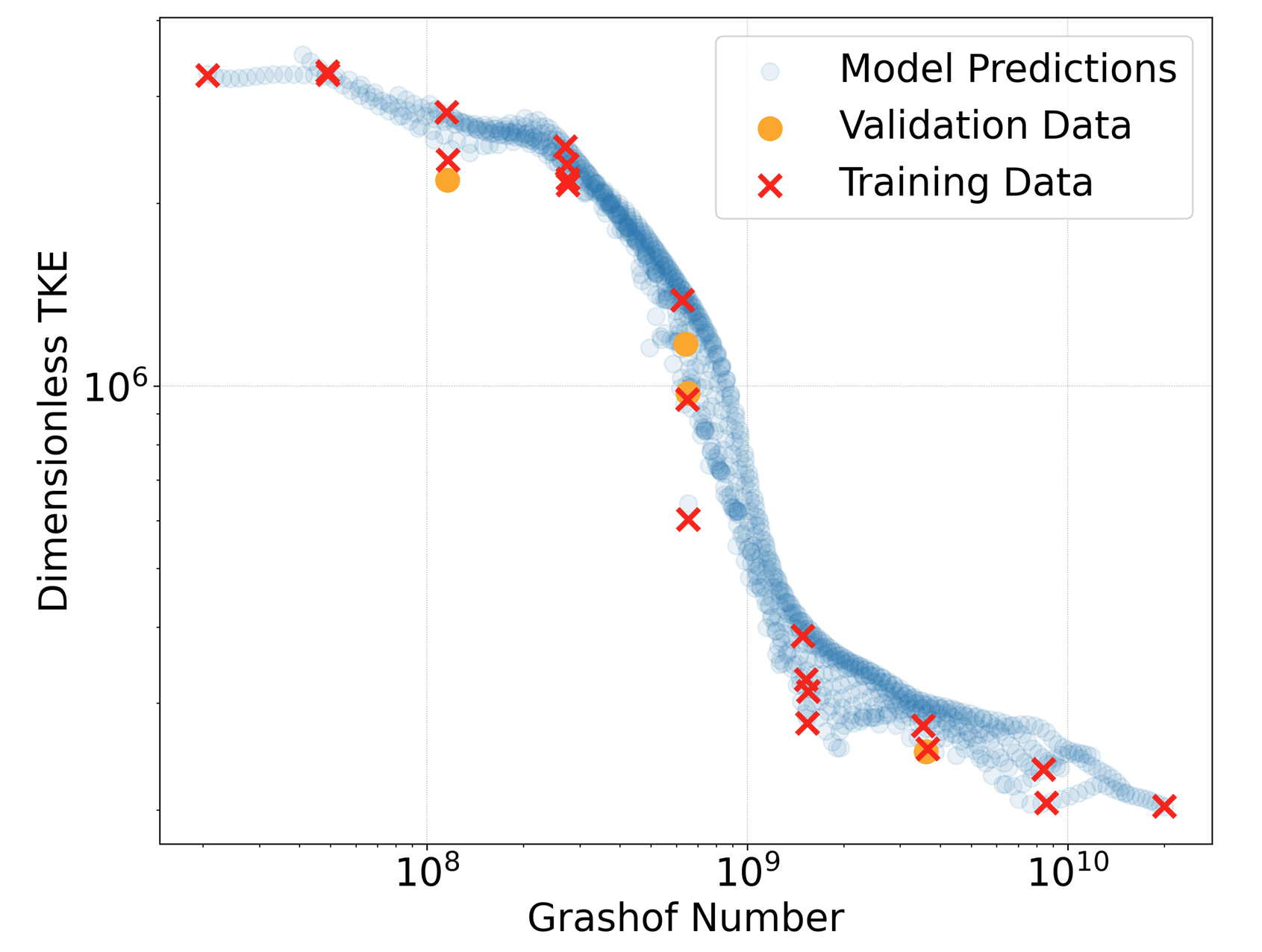}
      \caption{
        Predicted dimensionless TKE integrated over domain. Despite training on only 20 LES, we capture the transition to turbulence at $Gr = 10^9$.
      }
      \label{fig:grashof_tke_scatter}
    \end{minipage}
  \end{subfigure}
  \hfill
  \begin{subfigure}[t]{0.4\linewidth}
    \centering
    \begin{minipage}[t][8.5cm][t]{\linewidth}
      \includegraphics[width=\linewidth]{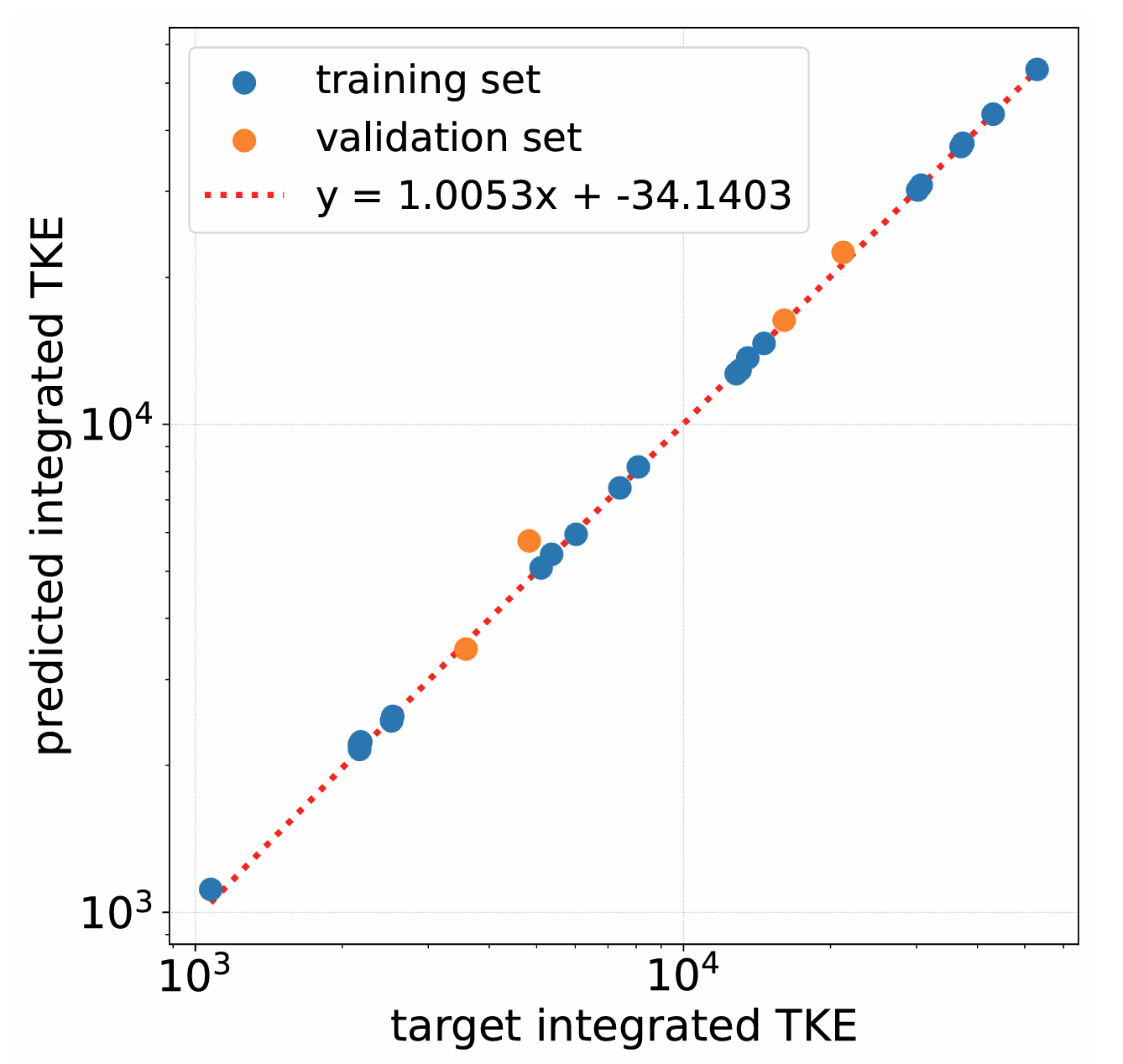}
      \caption{
        Scatter plot of each example's modeled TKE ($y$-axis) vs. the CFD simulation TKE ($x$-axis). The model accurately reproduces this integrated quantity even for validation examples.
      }
      \label{fig:integrated_TKE_scatter}
    \end{minipage}
  \end{subfigure}

  \vspace{1em} 

  \begin{subfigure}[t]{0.8\linewidth}
    \centering
    \begin{minipage}[t][8.5cm][t]{\linewidth}
      \includegraphics[width=\linewidth]{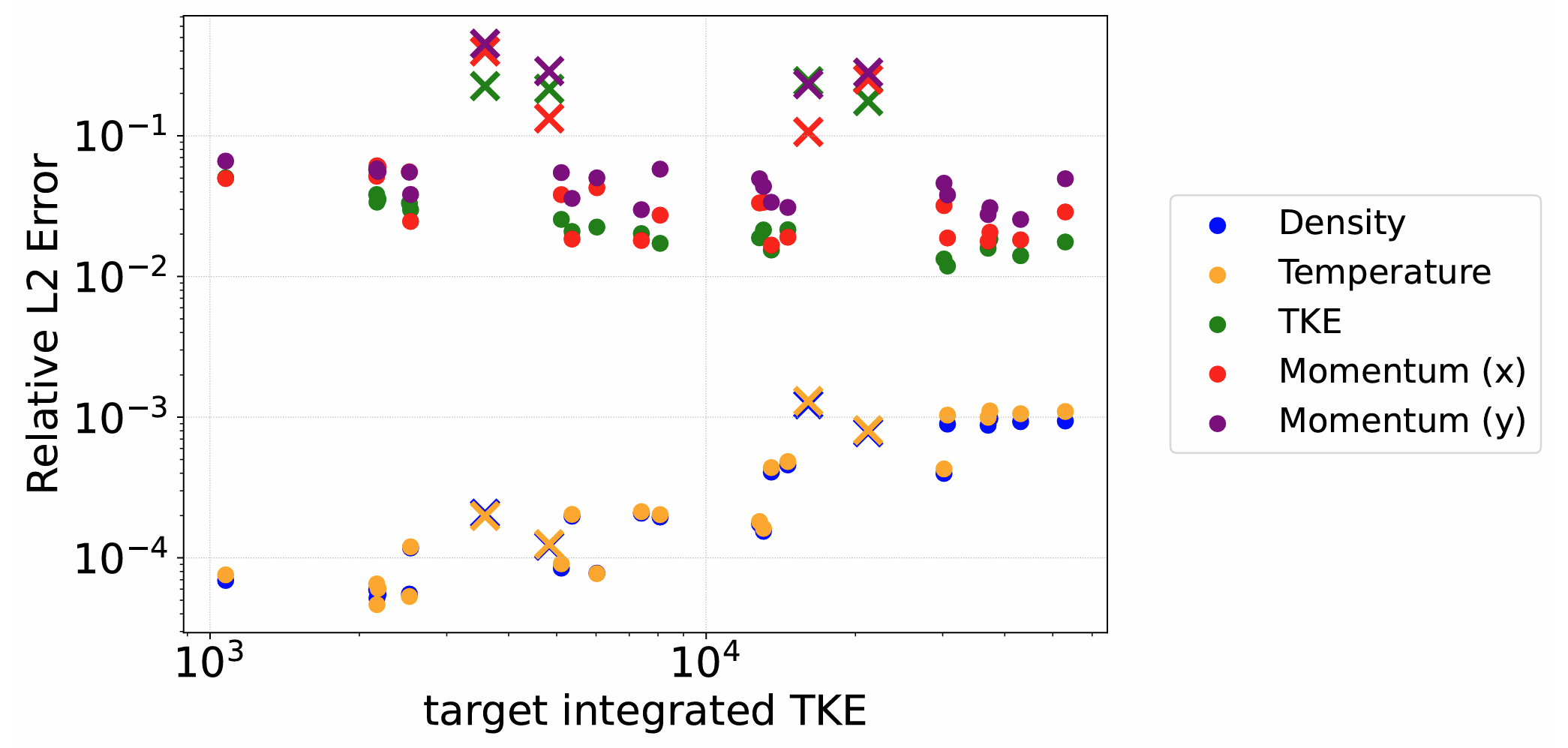}
      \caption{
        Point-wise relative $L^2$ error for all 25 simulations, organized by CFD target TKE. Data are color-coded by field; validation examples are marked with \textsc{x}'s.
      }
      \label{fig:pointwise_field_relative_L2_error}
    \end{minipage}
  \end{subfigure}

  \caption{
    Model accuracy for the turbulent kinetic energy (TKE) benchmark:
    (a) Grashof TKE scatter; (b) Integrated TKE scatter; (c) Point-wise relative $L^2$ error across all cases.
  }
  \label{fig:TKE_combined}
\end{figure}

\subsection{Conclusions}


We have presented a new framework for extracting data-driven reduced finite element exterior calculus models from data. A novel conditional transformer backbone prescribes both nonlinear finite element spaces and governing equations conditioned on a latent variable $Z$. For a suite of benchmarks we obtain highly accurate predictions generalizing across a broad range of conditioning variables with under ten basis functions for each problem, demonstrating an ability to perform real-time inference and adaptation to sensor data on consumer GPU hardware. We construct a digital twin using this framework, illustrating an ability to learn a real-time RANS-style model from a high-fidelity LES dataset consisting of few examples.

This work demonstrates that transformers do not need to be adapted in a physics/math-agnostic manner to construct AI-enabled models; the highly accurate conditioning depends crucially on the power of the transformer backbone, but by embedding within a FEEC framework we are able to provide theoretical guarantees typical of conventional finite elements. To treat unsteady problems, current techniques employ autoregressive transformer schemes. Our approach can similarly be adapted to that scenario but requires theoretical extensions which we postpone to a later work. 

\section*{Acknowledgements}
Sandia National Laboratories is a multimission laboratory managed and operated by National Technology \& Engineering Solutions of Sandia, LLC, a wholly owned subsidiary of Honeywell International Inc., for the U.S. Department of Energy’s National Nuclear Security Administration under contract DE-NA0003525. This paper describes objective technical results and analysis. Any subjective views or opinions that might be expressed in the paper do not necessarily represent the views of the U.S. Department of Energy or the United States Government. This article has been co-authored by an employee of National Technology \& Engineering Solutions of Sandia, LLC under Contract No. DE-NA0003525 with the U.S. Department of Energy (DOE). The employee owns all right, title and interest in and to the article and is solely responsible for its contents. The United States Government retains and the publisher, by accepting the article for publication, acknowledges that the United States Government retains a non-exclusive, paid-up, irrevocable, world-wide license to publish or reproduce the published form of this article or allow others to do so, for United States Government purposes. The DOE will provide public access to these results of federally sponsored research in accordance with the DOE Public Access Plan https://www.energy.gov/downloads/doe-public-access-plan.

N. Trask and B. Kinch's work is supported by the U.S. Department of Energy, Office of Advanced Scientific Computing Research under the "Scalable and Efficient Algorithms - Causal Reasoning, Operators and Graphs" (SEA-CROGS) project and the Early Career Research Program. E. Armstrong, M. Meehan, and J. Hewson's work is supported under the “Resolution-invariant deep learning for accelerated propagation of epistemic and aleatory uncertainty in multi-scale energy storage systems, and beyond” project (Project No. 81824). B. Shaffer's work is supported by the National Science Foundation Graduate Research Fellowship under Grant No. DGE-2236662.

SAND Number: SAND2025-xxxxx O

\vspace{5pt}
\noindent \textbf{Declaration of generative AI and AI-assisted technologies in the writing process:}

During the preparation of this work the author(s) used ChatGPT in order to format a complete draft, check for errors in derivations, and improve quality of typeset figures. After using this tool/service, the author(s) reviewed and edited the content as needed and take(s) full responsibility for the content of the publication.

\bibliographystyle{plain}
\bibliography{references.bib}

\begin{thebibliography}{10}

\bibitem{actor2024data}
Jonas~A Actor, Xiaozhe Hu, Andy Huang, Scott~A Roberts, and Nathaniel Trask.
\newblock Data-driven whitney forms for structure-preserving control volume analysis.
\newblock {\em Journal of Computational Physics}, 496:112520, 2024.

\bibitem{amos2017input}
Brandon Amos, Lei Xu, and J.~Zico Kolter.
\newblock Input convex neural networks.
\newblock In Doina Precup and Yee~Whye Teh, editors, {\em Proceedings of the 34th International Conference on Machine Learning}, volume~70 of {\em Proceedings of Machine Learning Research}, pages 146--155. PMLR, June--August 2017.

\bibitem{anderson2021mfem}
Robert Anderson, Julian Andrej, Andrew Barker, Jamie Bramwell, Jean-Sylvain Camier, Jakub Cerveny, Veselin Dobrev, Yohann Dudouit, Aaron Fisher, Tzanio Kolev, et~al.
\newblock Mfem: A modular finite element methods library.
\newblock {\em Computers \& Mathematics with Applications}, 81:42--74, 2021.

\bibitem{arbogast2007multiscale}
Todd Arbogast, Gergina Pencheva, Mary~F Wheeler, and Ivan Yotov.
\newblock A multiscale mortar mixed finite element method.
\newblock {\em Multiscale Modeling \& Simulation}, 6(1):319--346, 2007.

\bibitem{armijo1966minimization}
Larry Armijo.
\newblock Minimization of functions having lipschitz continuous first partial derivatives.
\newblock {\em Pacific Journal of mathematics}, 16(1):1--3, 1966.

\bibitem{arnold2018finite}
Douglas~N Arnold.
\newblock {\em Finite element exterior calculus}.
\newblock SIAM, 2018.

\bibitem{arnold2007compatible}
Douglas~N Arnold, Pavel~B Bochev, Richard~B Lehoucq, Roy~A Nicolaides, and Mikhail Shashkov.
\newblock {\em Compatible spatial discretizations}, volume 142.
\newblock Springer Science \& Business Media, 2007.

\bibitem{arnold2006finite}
Douglas~N Arnold, Richard~S Falk, and Ragnar Winther.
\newblock Finite element exterior calculus, homological techniques, and applications.
\newblock {\em Acta numerica}, 15:1--155, 2006.

\bibitem{barsinai2019datadriven}
Yohai Bar‑Sinai, Stephan Hoyer, Jason Hickey, and Michael~P. Brenner.
\newblock Learning data‑driven discretizations for partial differential equations.
\newblock {\em Proceedings of the National Academy of Sciences}, 116(31), 2019.

\bibitem{batzner2022e3}
Simon Batzner, Albert Musaelian, Lixin Sun, Mario Geiger, Jonathan~P Mailoa, Mordechai Kornbluth, Nicola Molinari, Tess~E Smidt, and Boris Kozinsky.
\newblock E (3)-equivariant graph neural networks for data-efficient and accurate interatomic potentials.
\newblock {\em Nature communications}, 13(1):2453, 2022.

\bibitem{bengio2005convex}
Yoshua Bengio, Nicolas~L. Roux, Pascal Vincent, Olivier Delalleau, and Patrice Marcotte.
\newblock Convex neural networks.
\newblock In {\em Advances in Neural Information Processing Systems 18}, pages 1233--1240, 2005.

\bibitem{biros2005parallel}
George Biros and Omar Ghattas.
\newblock Parallel lagrange--newton--krylov--schur methods for pde-constrained optimization. part i: The krylov--schur solver.
\newblock {\em SIAM Journal on Scientific Computing}, 27(2):687--713, 2005.

\bibitem{bochev2012solving}
Pavel Bochev, H~Carter Edwards, Robert~C Kirby, Kara Peterson, and Denis Ridzal.
\newblock Solving pdes with intrepid.
\newblock {\em Scientific Programming}, 20(2):151--180, 2012.

\bibitem{bochev2006principles}
Pavel~B Bochev and James~M Hyman.
\newblock Principles of mimetic discretizations of differential operators.
\newblock In {\em Compatible spatial discretizations}, pages 89--119. Springer, 2006.

\bibitem{bodnar2025earth}
Cristian Bodnar, Wessel~P. Bruinsma, Ana Lucic, Megan Stanley, Anna Allen, Johannes Brandstetter, Patrick Garvan, Maik Riechert, Jonathan~A. Weyn, Haiyu Dong, Jayesh~K. Gupta, Kit Thambiratnam, Alexander~T. Archibald, Chun-Chieh Wu, Elizabeth Heider, Max Welling, Richard~E. Turner, and Paris Perdikaris.
\newblock A foundation model for the earth system.
\newblock {\em Nature}, 641:1180--1187, 2025.

\bibitem{brooks1982streamline}
Alexander~N Brooks and Thomas~JR Hughes.
\newblock Streamline upwind/petrov-galerkin formulations for convection dominated flows with particular emphasis on the incompressible navier-stokes equations.
\newblock {\em Computer methods in applied mechanics and engineering}, 32(1-3):199--259, 1982.

\bibitem{cai2021equivariant}
Chen Cai, Nikolaos Vlassis, Lucas Magee, Ran Ma, Zeyu Xiong, and WaiChing Sun.
\newblock Equivariant geometric learning for digital rock physics: estimating formation factor and effective permeability tensors from morse graph.
\newblock {\em arXiv preprint arXiv:2104.05608}, 2021.

\bibitem{NASA2020}
{CFD Vision 2030 Study Team}.
\newblock Recommendations for future efforts in rans modeling and simulation.
\newblock Technical report, NASA Technical Memorandum 2020-XXXX, 2020.
\newblock States that RANS models, while robust, have not significantly improved predictive accuracy over the past 20 years and are often accurate to within a factor of two.

\bibitem{chattopadhyay2024oceannet}
Ashesh Chattopadhyay, Michael Gray, Tianning Wu, Anna~B. Lowe, and Ruoying He.
\newblock Oceannet: a principled neural operator-based digital twin for regional oceans.
\newblock {\em Scientific Reports}, 14:21181, 2024.

\bibitem{chaudhari2023learning}
Shreyas Chaudhari, Srinivasa Pranav, and José M.~F. Moura.
\newblock Learning gradients of convex functions with monotone gradient networks.
\newblock {\em arXiv}, abs/2301.10862, 2023.

\bibitem{chen2018neural}
Ricky~TQ Chen, Yulia Rubanova, Jesse Bettencourt, and David~K Duvenaud.
\newblock Neural ordinary differential equations.
\newblock {\em Advances in neural information processing systems}, 31, 2018.

\bibitem{chen2019symplectic}
Zhengdao Chen, Jianyu Zhang, Martin Arjovsky, and L{\'e}on Bottou.
\newblock Symplectic recurrent neural networks.
\newblock {\em arXiv preprint arXiv:1909.13334}, 2019.

\bibitem{cicci2023cardiac}
Ludovica Cicci, Stefania Fresca, Stefano Pagani, Andrea Manzoni, and Alfio Quarteroni.
\newblock Projection-based reduced order models for parameterized nonlinear time-dependent problems arising in cardiac mechanics.
\newblock {\em Mathematics in Engineering}, 5(2):1--38, 2023.

\bibitem{cockburn1989tvb}
Bernardo Cockburn and Chi-Wang Shu.
\newblock Tvb runge-kutta local projection discontinuous galerkin finite element method for conservation laws. ii. general framework.
\newblock {\em Mathematics of Computation}, 52(186):411--435, 1989.

\bibitem{cohen2016group}
Taco Cohen and Max Welling.
\newblock Group equivariant convolutional networks.
\newblock In {\em Proceedings of The 33rd International Conference on Machine Learning (ICML)}, volume~48, pages 2990--2999, 2016.

\bibitem{cranmer2020lagrangian}
Miles~D. Cranmer, Sam Greydanus, Stephan Hoyer, Peter~W. Battaglia, David~N. Spergel, and Shirley Ho.
\newblock Lagrangian neural networks.
\newblock {\em CoRR}, abs/2003.04630, 2020.

\bibitem{devlin2019bert}
Jacob Devlin, Ming-Wei Chang, Kenton Lee, and Kristina Toutanova.
\newblock Bert: Pre-training of deep bidirectional transformers for language understanding.
\newblock In {\em Proceedings of the 2019 Conference of the North American Chapter of the Association for Computational Linguistics: Human Language Technologies}, pages 4171--4186, 2019.

\bibitem{dosovitskiy2021image}
Alexey Dosovitskiy, Lucas Beyer, Alexander Kolesnikov, Dirk Weissenborn, Xiaohua Zhai, Thomas Unterthiner, Mostafa Dehghani, Matthias Minderer, Georg Heigold, Sylvain Gelly, Jakob Uszkoreit, and Neil Houlsby.
\newblock An image is worth 16x16 words: Transformers for image recognition at scale.
\newblock In {\em International Conference on Learning Representations (ICLR)}, 2021.
\newblock arXiv:2010.11929.

\bibitem{Duraisamy2018}
Karthik Duraisamy, Gianluca Iaccarino, and Heng Xiao.
\newblock Turbulence modeling in the age of data.
\newblock {\em Annual Review of Fluid Mechanics}, 51:357--377, 2019.
\newblock Review of empirical closures and data‑driven calibration approaches.

\bibitem{durbin2011statistical}
Paul~A Durbin and BA~Pettersson Reif.
\newblock {\em Statistical theory and modeling for turbulent flows}.
\newblock John Wiley \& Sons, 2011.

\bibitem{Finegan2017}
Donal~P. Finegan, Eric Darcy, Matthew Keyser, Bernhard Tjaden, Thomas M.~M. Heenan, Rhodri Jervis, Josh~J. Bailey, Romeo Malik, Nghia~T. Vo, Oxana~V. Magdysyuk, Robert Atwood, Michael Drakopoulos, Marco DiMichiel, Alexander Rack, Gareth Hinds, Dan J.~L. Brett, and Paul~R. Shearing.
\newblock Characterising thermal runaway within lithium-ion cells by inducing and monitoring internal short circuits.
\newblock {\em Energy \& Environmental Science}, 10(6):1377--1388, 2017.
\newblock High-speed X‑ray imaging of thermal runaway in 18650 cells.

\bibitem{fuhg2023stress}
Jan~N. Fuhg, Nikolaos Bouklas, and Reese~E. Jones.
\newblock Stress representations for tensor basis neural networks: alternative formulations to finger-rivlin-ericksen.
\newblock {\em arXiv preprint arXiv:2308.11080}, 2023.

\bibitem{goodfellow2016deep}
Ian Goodfellow, Yoshua Bengio, and Aaron Courville.
\newblock {\em Deep Learning}.
\newblock MIT Press, 2016.

\bibitem{goswami2022physics}
Somdatta Goswami, Aniruddha Bora, Yue Yu, and George~Em Karniadakis.
\newblock Physics‑informed deep neural operator networks.
\newblock {\em Machine Learning in Modeling and Simulation}, 2022.

\bibitem{greydanus2019hamiltonian}
Samuel Greydanus, Misko Dzamba, and Jason Yosinski.
\newblock Hamiltonian neural networks.
\newblock In {\em Advances in Neural Information Processing Systems 32 (NeurIPS 2019)}, pages 15353--15363, 2019.

\bibitem{gruber2024efficiently}
Anthony Gruber, Kookjin Lee, Haksoo Lim, Noseong Park, and Nathaniel Trask.
\newblock Efficiently parameterized neural metriplectic systems.
\newblock {\em arXiv preprint arXiv:2405.16305}, 2024.

\bibitem{gruber2023reversible}
Anthony Gruber, Kookjin Lee, and Nathaniel Trask.
\newblock Reversible and irreversible bracket-based dynamics for deep graph neural networks.
\newblock {\em Advances in Neural Information Processing Systems}, 36:38454--38484, 2023.

\bibitem{gupta18a}
Vineet Gupta, Tomer Koren, and Yoram Singer.
\newblock Shampoo: Preconditioned stochastic tensor optimization.
\newblock In Jennifer Dy and Andreas Krause, editors, {\em Proceedings of the 35th International Conference on Machine Learning}, volume~80 of {\em Proceedings of Machine Learning Research}, pages 1842--1850. PMLR, 10--15 Jul 2018.

\bibitem{gustafsson1995time}
Bertil Gustafsson, Heinz-Otto Kreiss, and Joseph Oliger.
\newblock {\em Time dependent problems and difference methods}, volume~24.
\newblock John Wiley \& Sons, 1995.

\bibitem{gustafsson2020scikit}
Tom Gustafsson and Geordie~Drummond Mcbain.
\newblock scikit-fem: A python package for finite element assembly.
\newblock {\em Journal of Open Source Software}, 5(52):2369, 2020.

\bibitem{Guzman2006}
Johnny Guzmán.
\newblock Local analysis of discontinuous galerkin methods applied to singularly perturbed problems.
\newblock {\em Journal of Numerical Mathematics}, 14(1):41--56, 2006.

\bibitem{hinze2008optimization}
Michael Hinze, Ren{\'e} Pinnau, Michael Ulbrich, and Stefan Ulbrich.
\newblock {\em Optimization with PDE constraints}, volume~23.
\newblock Springer Science \& Business Media, 2008.

\bibitem{horn2012matrix}
Roger~A Horn and Charles~R Johnson.
\newblock {\em Matrix analysis}.
\newblock Cambridge university press, 2012.

\bibitem{Houston2002}
Paul Houston, Christoph Schwab, and Endre Süli.
\newblock Discontinuous hp-finite element methods for advection–diffusion–reaction problems.
\newblock {\em SIAM Journal on Numerical Analysis}, 39(6):2133--2163, 2002.

\bibitem{hu2019topology}
Xiaoling Hu, Fuxin Li, Dimitris Samaras, and Chao Chen.
\newblock Topology-preserving deep image segmentation.
\newblock {\em Advances in neural information processing systems}, 32, 2019.

\bibitem{Incropera2007}
Frank~P. Incropera, David~P. DeWitt, Theodore~L. Bergman, and Adrienne~S. Lavine.
\newblock {\em Fundamentals of Heat and Mass Transfer}.
\newblock Wiley, 6th edition, 2007.
\newblock Critical Grashof number for transition to turbulence in natural convection along vertical walls, \(Gr\sim10^9\).

\bibitem{jiang2024structure}
Shuai Jiang, Jonas Actor, Scott Roberts, and Nathaniel Trask.
\newblock A structure-preserving domain decomposition method for data-driven modeling.
\newblock {\em arXiv preprint arXiv:2406.05571}, 2024.

\bibitem{kapteyn2020interpretable}
Michael Kapteyn, David~J. Knezevic, and Karen~E. Willcox.
\newblock Toward predictive digital twins via component-based reduced-order models and interpretable machine learning.
\newblock In {\em AIAA Scitech Forum}, 2020.

\bibitem{pretorius2021predictive}
Michael~G. Kapteyn, Jacob V.~R. Pretorius, and Karen~E. Willcox.
\newblock A probabilistic graphical model foundation for enabling predictive digital twins at scale.
\newblock {\em Nature Computational Science}, 1(5):337--347, 2021.

\bibitem{kobayashi2024deeponet}
Kazuma Kobayashi and Syed~Bahauddin Alam.
\newblock Deep neural operator-driven real-time inference to enable digital twin solutions for nuclear energy systems.
\newblock {\em Scientific Reports}, 14:2101, 2024.

\bibitem{kovachki2023neural}
Zongyi Kovachki, Burigede Liu, Kaushik Bhattacharya, Andrew Stuart, and Anima Anandkumar.
\newblock Neural operator: learning maps between function spaces.
\newblock {\em Journal of Machine Learning Research}, 24(1524):1--97, 2023.

\bibitem{kuzmin2012flux}
Dmitri Kuzmin, Rainald L{\"o}hner, and Stefan Turek.
\newblock {\em Flux-corrected transport: principles, algorithms, and applications}.
\newblock Springer Science \& Business Media, 2012.

\bibitem{lecun1998gradient}
Yann LeCun, L{\'e}on Bottou, Yoshua Bengio, and Patrick Haffner.
\newblock Gradient-based learning applied to document recognition.
\newblock {\em Proceedings of the IEEE}, 86(11):2278--2324, 1998.

\bibitem{li2021fourier}
Zongyi Li, Nikola Kovachki, Kamyar Azizzadenesheli, Burigede Liu, Kaushik Bhattacharya, Andrew Stuart, and Anima Anandkumar.
\newblock Fourier neural operator for parametric partial differential equations.
\newblock In {\em Proceedings of the International Conference on Learning Representations (ICLR)}, 2021.
\newblock arXiv:2010.08895.

\bibitem{ling2016machine}
Julia Ling, Reese~E. Jones, and Jeremy Templeton.
\newblock Machine learning strategies for systems with invariance properties.
\newblock {\em Journal of Computational Physics}, 318:22--35, 2016.

\bibitem{liu2024deeponetdt}
Ning Liu, Xuxiao Li, Manoj~R. Rajanna, Edward~W. Reutzel, Brady Sawyer, Prahalada Rao, Jim Lua, Nam Phan, and Yue Yu.
\newblock Deep neural operator enabled digital twin modeling for additive manufacturing.
\newblock {\em Advances in Computational Science and Engineering}, 2024.

\bibitem{LIVESCU2008}
D.~Livescu and J.~R. Ristorcelli.
\newblock Variable-density mixing in buoyancy-driven turbulence.
\newblock {\em Journal of Fluid Mechanics}, 605:145–180, 2008.

\bibitem{lu2021learning}
Lu~Lu, Pengzhan Jin, Guofei Pang, Zhiwen Zhang, and George~Em Karniadakis.
\newblock Learning nonlinear operators via deeponet based on the universal approximation theorem of operators.
\newblock {\em Nature Machine Intelligence}, 3:218--229, 2021.

\bibitem{Meehan2025}
Michael Meehan, Andrew~J. Kurzawski, and John~C. Hewson.
\newblock Flow dynamics and heat transfer in simplified battery energy storage systems with heated battery modules.
\newblock {\em Submitted for publication to Journal of Energy Storage}, 2025.

\bibitem{Nash2010}
Gary~B. Nash.
\newblock {\em The Liberty Bell}.
\newblock Yale University Press, New Haven, CT, 2010.
\newblock A cultural history tracing the bell's origins, recasting, role in abolitionist and wartime symbolism.

\bibitem{nobile2008sparse}
Francesco Nobile, Raul Tempone, and Clayton~G. Webster.
\newblock A sparse grid stochastic collocation method for partial differential equations with random input data.
\newblock {\em SIAM Journal on Numerical Analysis}, 46(5):2309--2345, 2008.

\bibitem{Novik_torchoptimizers}
Mykola Novik.
\newblock torch-optimizer -- collection of optimization algorithms for pytorch.
\newblock \url{https://github.com/jettify/torch-optimizer}, 2020.
\newblock Version 1.0.1.

\bibitem{patel2022thermodynamically}
Ravi~G Patel, Indu Manickam, Nathaniel~A Trask, Mitchell~A Wood, Myoungkyu Lee, Ignacio Tomas, and Eric~C Cyr.
\newblock Thermodynamically consistent physics-informed neural networks for hyperbolic systems.
\newblock {\em Journal of Computational Physics}, 449:110754, 2022.

\bibitem{patel2021physics}
Ravi~G Patel, Nathaniel~A Trask, Mitchell~A Wood, and Eric~C Cyr.
\newblock A physics-informed operator regression framework for extracting data-driven continuum models.
\newblock {\em Computer Methods in Applied Mechanics and Engineering}, 373:113500, 2021.

\bibitem{rathgeber2016firedrake}
Florian Rathgeber, David~A Ham, Lawrence Mitchell, Michael Lange, Fabio Luporini, Andrew~TT McRae, Gheorghe-Teodor Bercea, Graham~R Markall, and Paul~HJ Kelly.
\newblock Firedrake: automating the finite element method by composing abstractions.
\newblock {\em ACM Transactions on Mathematical Software (TOMS)}, 43(3):1--27, 2016.

\bibitem{roddenberry2021principled}
T~Mitchell Roddenberry, Nicholas Glaze, and Santiago Segarra.
\newblock Principled simplicial neural networks for trajectory prediction.
\newblock In {\em International Conference on Machine Learning}, pages 9020--9029. PMLR, 2021.

\bibitem{runje2022constrained}
Davor Runje and Sharath~M. Shankaranarayana.
\newblock Constrained monotonic neural networks.
\newblock {\em arXiv}, abs/2205.11775, 2022.
\newblock arXiv preprint, last revised May 31, 2023.

\bibitem{schotthofer2022structure}
Steffen Schotth{\"o}fer, Tianbai Xiao, Martin Frank, and Cory~D Hauck.
\newblock Structure preserving neural networks: A case study in the entropy closure of the boltzmann equation.
\newblock In {\em ICML}, pages 19406--19433, 2022.

\bibitem{Schwarzkopf2016}
J.~D. Schwarzkopf, D.~Livescu, J.~R. Baltzer, R.~A. Gore, and J.~R. Ristorcelli.
\newblock A two-length scale turbulence model for single-phase multi-fluid mixing.
\newblock {\em Flow, Turbulence and Combustion}, 96:1--43, 2016.

\bibitem{Schotzau2000}
Christian Schötzau and Christoph Schwab.
\newblock An hp discontinuous galerkin finite element method for convection-diffusion problems.
\newblock {\em Computer Methods in Applied Mechanics and Engineering}, 180(1–2):99--115, 2000.

\bibitem{shi2023distributed}
Hao-Jun~Michael Shi, Tsung-Hsien Lee, Shintaro Iwasaki, Jose Gallego-Posada, Zhijing Li, Kaushik Rangadurai, Dheevatsa Mudigere, and Michael Rabbat.
\newblock A distributed data-parallel pytorch implementation of the distributed shampoo optimizer for training neural networks at-scale.
\newblock {\em arXiv preprint arXiv:2309.06497}, 2023.

\bibitem{SOD19781}
Gary~A Sod.
\newblock A survey of several finite difference methods for systems of nonlinear hyperbolic conservation laws.
\newblock {\em Journal of Computational Physics}, 27(1):1--31, 1978.

\bibitem{sweby1984high}
P.~K. Sweby.
\newblock High‐resolution schemes using flux limiters for hyperbolic conservation laws.
\newblock {\em SIAM Journal on Numerical Analysis}, 21(5):995--1011, 1984.

\bibitem{FUEGO}
Sierra Thermal~Fluids Team.
\newblock Sierra multimechanics module: Fuego user manual - version 5.24.
\newblock {SAND Report} 2025-03449O, Sandia National Laboratories, Albuquerque, NM and Livermore, CA, 2025.

\bibitem{trask2022enforcing}
Nathaniel Trask, Andy Huang, and Xiaozhe Hu.
\newblock Enforcing exact physics in scientific machine learning: a data-driven exterior calculus on graphs.
\newblock {\em Journal of Computational Physics}, 456:110969, 2022.

\bibitem{AIAYN}
Ashish {Vaswani}, Noam {Shazeer}, Niki {Parmar}, Jakob {Uszkoreit}, Llion {Jones}, Aidan~N. {Gomez}, Lukasz {Kaiser}, and Illia {Polosukhin}.
\newblock {Attention Is All You Need}.
\newblock {\em arXiv e-prints}, page arXiv:1706.03762, June 2017.

\bibitem{vlassis2021sobolev}
Nikolaos~N Vlassis and WaiChing Sun.
\newblock Sobolev training of thermodynamic-informed neural networks for interpretable elasto-plasticity models with level set hardening.
\newblock {\em Computer Methods in Applied Mechanics and Engineering}, 377:113695, 2021.

\bibitem{Wang2012}
Qingsong Wang, Ping Ping, Xuejuan Zhao, Guanquan Chu, Jinhua Sun, and Chunhua Chen.
\newblock Thermal runaway caused fire and explosion of lithium ion battery.
\newblock {\em Journal of Power Sources}, 208:210--224, 2012.
\newblock Review covering hazards, reaction mechanisms, thermal models, and prevention.

\bibitem{wang2024micrometer}
Sifan Wang, Tong-Rui Liu, Shyam Sankaran, and Paris Perdikaris.
\newblock Micrometer: Micromechanics transformer for predicting mechanical responses of heterogeneous materials.
\newblock {\em arXiv preprint arXiv:2410.05281}, 2024.

\bibitem{wang2024bridging}
Sifan Wang, Jacob~H Seidman, Shyam Sankaran, Hanwen Wang, George~J Pappas, and Paris Perdikaris.
\newblock Bridging operator learning and conditioned neural fields: A unifying perspective.
\newblock {\em arXiv preprint arXiv:2405.13998}, 2024.

\bibitem{wang2024cvit}
Sifan Wang, Jacob~H Seidman, Shyam Sankaran, Hanwen Wang, George~J Pappas, and Paris Perdikaris.
\newblock Cvit: Continuous vision transformer for operator learning.
\newblock {\em arXiv preprint arXiv:2405.13998}, 2024.

\bibitem{wang2021understanding}
Sifan Wang, Yujun Teng, and Paris Perdikaris.
\newblock Understanding and mitigating gradient flow pathologies in physics-informed neural networks.
\newblock {\em SIAM Journal on Scientific Computing}, 43(5):A3055--A3081, 2021.

\bibitem{wang2021learning}
Sifan Wang, Hanwen Wang, and Paris Perdikaris.
\newblock Learning the solution operator of parametric partial differential equations with physics-informed {{DeepOnets}}.
\newblock {\em arXiv preprint arXiv:2103.10974}, 2021.

\bibitem{wang2024multistage}
Yongji Wang and Ching‑Yao Lai.
\newblock Multi‑stage neural networks: Function approximator of machine precision.
\newblock {\em Journal of Computational Physics}, page 112865, 2024.

\bibitem{Whitney1957}
Hassler Whitney.
\newblock {\em Geometric Integration Theory}.
\newblock Princeton University Press, Princeton, NJ, 1957.

\bibitem{willcox2023foundational}
Karen Willcox et~al.
\newblock Foundational research gaps and future directions for digital twins.
\newblock Technical report, National Academies of Sciences, Engineering, and Medicine, 2023.
\newblock DOI: \url{https://doi.org/10.17226/26894}.

\bibitem{xie2022neural}
Yiheng Xie, Towaki Takikawa, Shunsuke Saito, Or~Litany, Shiqin Yan, Numair Khan, Federico Tombari, James Tompkin, Vincent Sitzmann, and Srinath Sridhar.
\newblock Neural fields in visual computing and beyond.
\newblock In {\em Computer Graphics Forum}, volume~41, pages 641--676. Wiley Online Library, 2022.

\bibitem{xiu2005high}
Dongbin Xiu and Jan~S. Hesthaven.
\newblock High‐order collocation methods for differential equations with random inputs.
\newblock {\em SIAM Journal on Scientific Computing}, 27(3):1118--1139, 2005.

\bibitem{yao2018nonintrusive}
Weiguang Yao and Charbel Farhat.
\newblock Non-intrusive reduced-order modeling for high-dimensional problems via machine learning.
\newblock {\em Journal of Computational Physics}, 387:56--84, 2018.

\bibitem{zhang2018machine}
Pengfei Zhang, Huitao Shen, and Hui Zhai.
\newblock Machine learning topological invariants with neural networks.
\newblock {\em Physical review letters}, 120(6):066401, 2018.

\bibitem{Zhang2021}
Yuwen Zhang and Ali Faghri.
\newblock Multiscale lithium-battery modeling from materials to cells.
\newblock {\em Annual Review of Chemical and Biomolecular Engineering}, 12:1--20, 2021.
\newblock Review of methods for bridging nanoscale to macroscale in Li‐ion battery thermal modeling.

\end{thebibliography}

\appendix

\section{Architecture and training details}
\label{app:nn_details}

For both the shape function model and the flux model, we can increase the expressivity of the model simply by stacking more transformer blocks, or by using a higher dimensional embedding space. For simplicity, we use the same embedding dimension (128), the same number of attention heads (4), and the same activation functions (GELU for the shape function model, Tanh for the flux model), and the same number of transformer blocks in the flux model in each case: 3. The only hyperparameter that changes from example to example is the number of transformer blocks used in \texttt{ShapeFunctionModel}. We report this for each example shown above, along with the total learnable parameter count across both the shape function and flux neural networks, in Table \ref{tab:model_summary}.

\begin{table}[htbp]
    \centering
    \caption{Number of Transformer Blocks and Total Parameter Counts}
    \begin{tabular}{lcc}
        \toprule
        Example & \texttt{ShapeFunctionModel} $N_\textrm{blocks}$ & Total Parameter Count \\
        \midrule
        1D Advection-Diffusion & 2 & 663,747 \\
        2D Advection-Diffusion & 6 & 1,193,988 \\
        Riemann (Dirichlet) & 3 & 796,999 \\
        Riemann (Neumann) & 6 & 1,193,988 \\
        Charge in Conducting Shell & 3 & 796,999 \\
        Battery Rack & 6 & 1,193,988 \\
        \bottomrule
    \end{tabular}
    \label{tab:model_summary}
\end{table}

We use dropout in the \texttt{ShapeFunctionModel}, but not in the learnable flux model. We have found that a dropout rate of 0.1 encourages compact shape functions and fluxes between them which generalize well. Near the end of training, the dropout rate is reduced from 0.1 to 0 on a linear schedule, typically over 1000--10,000 epochs, depending on the problem. As an example, Figure \ref{fig:loss_history_confidence} shows how the training and validation loss evolve as a function of epoch for the Riemann problem with Dirichlet boundary conditions. The ``knee'' in the curves at 20,000 epochs is when the dropout scheduling begins. Before this, the characteristic compartmentalization of the problem domain (Figure \ref{fig:riemann_dir_pou_plots}) is already established---reducing the dropout rate to zero simply ``sharpens up'' these boundaries.

\begin{figure}
    \centering
    \includegraphics[width=0.5\linewidth]{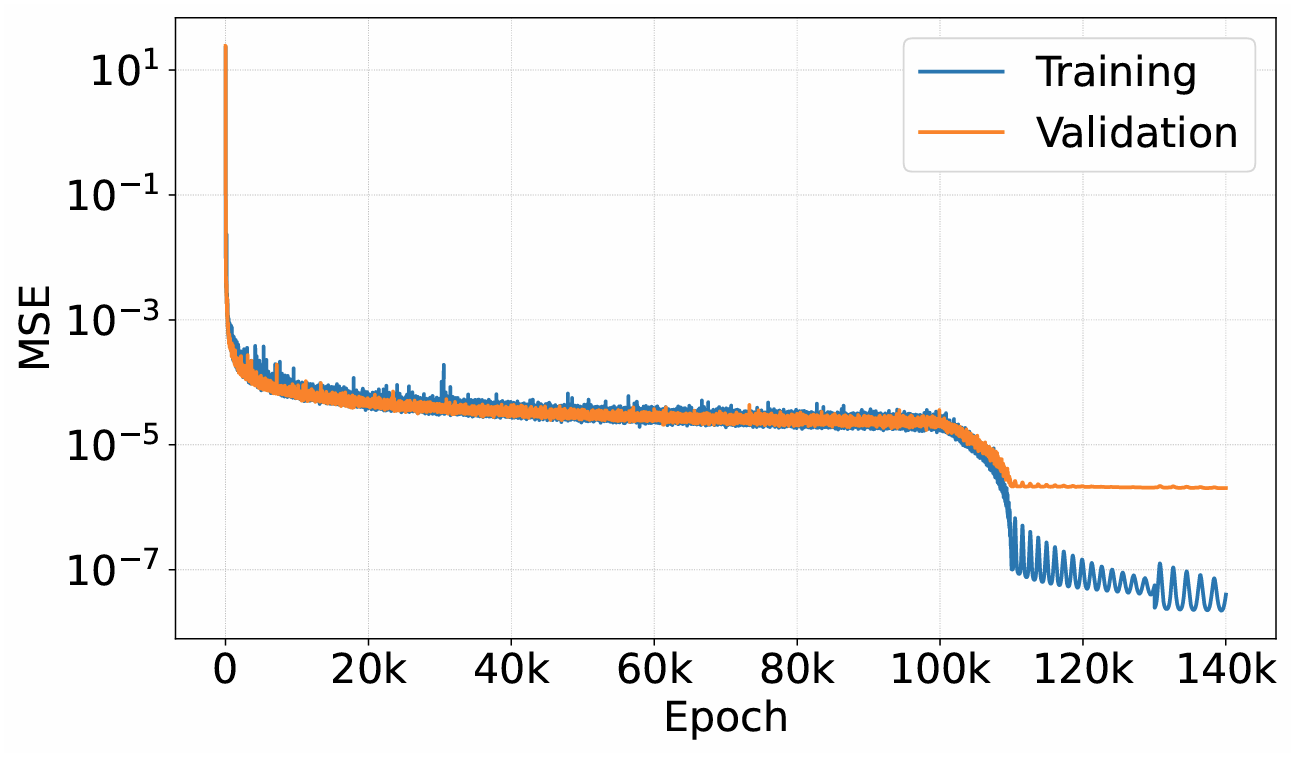}
    \caption{A typical loss curve using our method. As the dropout rate in the \texttt{ShapeFunctionModel} is scheduled down to zero at epoch 20k, the loss drops rapidly.}
    \label{fig:1d_ad_loss_curves}
\end{figure}

Finally, for all examples we use the Shampoo optimizer \citep{gupta18a, Novik_torchoptimizers}, with a learning rate of 0.1, with no momentum or weight decay; the preconditioner matrix is updated with each iteration.
\FloatBarrier
\end{document}